\pgfplotsset{compat=1.16}
\newenvironment{customlegend}[1][]{%
    \begingroup
    \csname pgfplots@init@cleared@structures\endcsname
    \pgfplotsset{#1}%
}{%
    \csname pgfplots@createlegend\endcsname
    \endgroup
}%
\def\addlegendimage{\csname pgfplots@addlegendimage\endcsname}
\definecolor{tcolor}{rgb}{0, 0.46484375, 0.73046875}
\definecolor{acolor}{rgb}{0.9296875, 0.46484375, 0.19921875}
\definecolor{xcolor}{rgb}{0.2980, 0.7529, 0.7490}
\theoremstyle{plain}
\newtheorem{proposition}{Proposition}
\theoremstyle{definition}
\newtheorem{definition}{Definition}
\newtheorem*{remark}{Remark}
\newtheorem*{example}{Example}
\DeclareMathOperator*{\plim}{plim}
\begin{document}

\icmltitlerunning{Arithmetic Sampling: Parallel Diverse Decoding for Large Language Models}

\twocolumn[
\icmltitle{Arithmetic Sampling: Parallel Diverse Decoding for Large Language Models}

\icmlsetsymbol{equal}{*}

\begin{icmlauthorlist}
\icmlauthor{Luke Vilnis}{equal,goog}
\icmlauthor{Yury Zemlyanskiy}{equal,goog}
\icmlauthor{Patrick Murray}{goog}
\icmlauthor{Alexandre Passos}{goog}
\icmlauthor{Sumit Sanghai}{goog}

\end{icmlauthorlist}

\icmlaffiliation{goog}{Work done while all authors were at Google Research.}

\icmlcorrespondingauthor{Luke Vilnis}{lvilnis@google.com}
\icmlcorrespondingauthor{Yury Zemlyanskiy}{urikz@google.com}

\icmlkeywords{Machine Learning, ICML}

\vskip 0.3in
]

\printAffiliationsAndNotice{\icmlEqualContribution} 

\begin{abstract}
Decoding methods for large language models often trade-off between diversity of outputs and parallelism of computation. 
Methods such as beam search and Gumbel top-k sampling can guarantee a different output for each element of the beam, but are not easy to parallelize. 
Alternatively, methods such as temperature sampling and its modifications (top-k sampling, nucleus sampling, typical decoding, and others), are embarrassingly parallel, but have no guarantees about duplicate samples. 
We present a framework for sampling according to an arithmetic code book implicitly defined by a large language model, compatible with common sampling variations, with provable beam diversity under certain conditions, as well as being embarrassingly parallel and providing unbiased and consistent expectations from the original model.
We demonstrate the effectiveness of our approach on WMT machine translation, more than halving the standard deviation when estimating expected BLEU score reward, and closing the BLEU score gap between independent sampling and beam search by up to 63\%. 
\end{abstract}

\section{Introduction}

Large language models (LLMs) based on transformers are crucial to modern natural language processing. The ability of LLMs to capture knowledge from massive pretraining datasets is useful for applications such as machine translation and predictive text \citep{t5, brown2020language, Radford2019LanguageMA} as well as for automated speech recognition \citep{martinez2021attention} and image captioning \citep{devlin2015language}. However, because of the powerful nonlinear dependencies in the architecture, options for inference are limited.

While LLM inference can be performed exactly for the case of drawing independent samples, practical systems often use inexact search---often modifications to beam search---to guarantee high-quality and diverse (either in n-gram overlap or semantic difference) samples. Search-based approaches including beam search, stochastic beam search (Gumbel top-k) \citep{kool2019stochastic}, determinantal beam search \citep{meister2021determinantal}, and others, can produce diverse samples by construction, at the cost of being difficult to efficiently parallelize, as they must examine the entire set of partial predictions (known as the \emph{beam}) at each time step.

\begin{figure}
    \centering
    \hspace*{-1.5em}
    \includegraphics[width=26em]{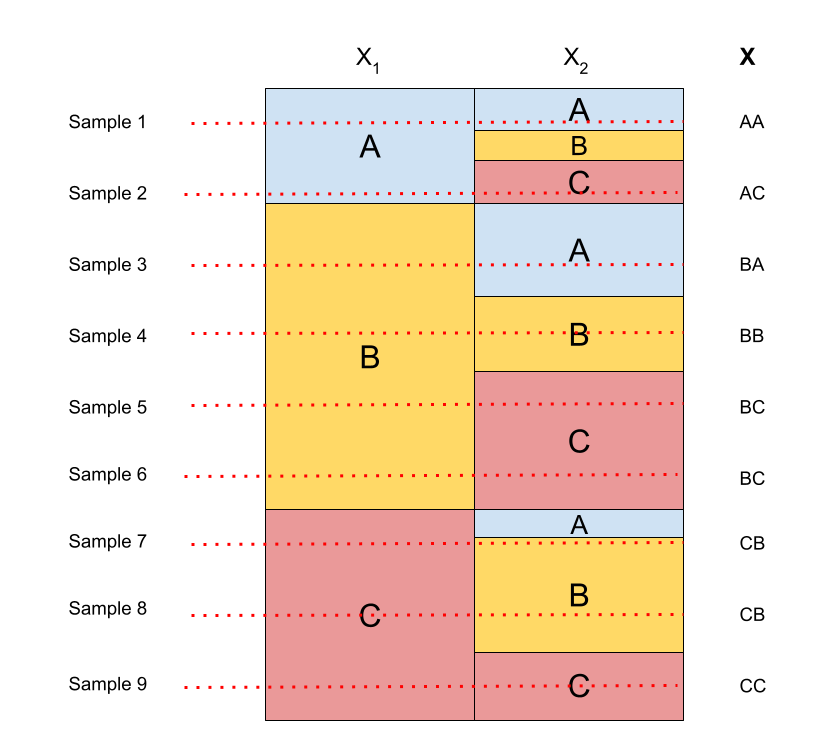}
    \caption{Sequence model over sequences of length two and a vocabulary of three symbols mapping points in the unit interval to each sequence. An even lattice of code points parallelizes decoding into diverse high-probability sequences.
    }
    \label{fig:toy_diagram}
\end{figure}

Intuitively, there seems to be a trade-off between parallelizability of a sampling algorithm and ability to guarantee non-duplicate samples.
Methods based on ancestral sampl\-ing parallelize very well as they turn a random number generation seed into a sample independently. However, high proba\-bility sequences are often generated multiple times.
Conversely, search-style methods inherently avoid generating duplicate samples but when parallelized require synchronizing across replicas to sort the candidates at each step.

In response to this, we introduce \emph{arithmetic sampling}, a technique for sampling from large language models that produces a set of \emph{non-independent} samples from the model, based on a coding scheme implicitly defined by the model. With this coding scheme, distant codes in code space represent different sequences. Further, decoding each code can be done independently from all other codes. Arithmetic sampling boasts provable beam diversity under certain conditions, produces unbiased expectations from the original model, and is embarrassingly parallel and as simple to implement as normal sampling.

In addition to analyzing bias and consistency of estimators based on arithmetic sampling, we present results on the metric properties of the codebook space and the conditions under which it improves sample diversity, as well as an analysis of the estimator variance. 

Comparing against equivalent hyperparameters for standard sampling, we show improvements of nearly 1 point of BLEU in oracle experiments for En/Fr WMT translation, closing the gap between independent sampling and beam search by up to 63\%,  reducing estimation variance by more than half and improving beam diversity. We see comparable improvements in En/Ro translation and variance reduction for ROUGE score on CNN/DailyMail summarization. We release an open-source implementation of our algorithm\footnote{Code is available at \url{https://github.com/google-research/google-research/tree/master/arithmetic_sampling}} in the popular T5X transformer library \cite{t5x}.

\section{Related Work}

This paper draws on three main threads of related work. Coding theory, diverse sampling, and quasi-Monte Carlo.

The use of latent ``codes'' to represent data has a long history in the neural network and representation learning literature, from autoencoders \citep{lecun1987phd} to sparse coding algorithms like K-SVD \citep{aharon2006k}. Rather than using a high dimensional code learned from data using an iterative algorithm or backpropagation, we design a simple one dimensional arithmetic code \citep{10.5555/1146355} post-hoc from a trained language model.

Diverse sampling inference techniques for large language models fall into two categories: techniques for producing a diverse \emph{beam} (sample of sequences), and techniques for discouraging overlap (n-gram repetition) within a single sequence. The former encompasses methods like determinantal beam search \citep{meister2021determinantal}, parallel approximate decoding \citep{cho2016noisy},  stochastic beam search \citep{kool2019stochastic}, and conditional poisson stochastic beam search \citep{meister-etal-2021-conditional}. Our method differs from (determinantal) beam search or parallel approximate decoding in that it is designed to faithfully sample from the underlying probability model in that sample means can be used to compute unbiased expectations. Unlike beam search or sampling-without-replacement based variants, our algorithm is embarassingly parallel.

Methods such as temperature sampling, top-k sampling \citep{fan2018hierarchical}, nucleus sampling \citep{holtzman2019curious}, Mirostat \citep{basu2020mirostat}, and typical decoding \citep{meister2022typical} are useful both for increasing diversity over standard beam search both across the beam and within a single long generation. These methods, and any others that modify conditional logits, are fully compatible with and complementary to our algorithm, and we provide experiments with temperature sampling and top-k sampling demonstrating improvements.

There is also work on changing the training objective using \emph{unlikelihood} \citep{welleck2019neural} or reinforcement learning \citep{lagutin2021implicit} so that standard generation schemes produce more diverse outputs, which is also orthogonal to our methods.

The final thread of related work is quasi-Monte Carlo integration. (Randomized) Quasi-Monte Carlo techniques \citep{l2016randomized} combine the adaptive anytime properties of Monte Carlo estimation with the reduced variance of lattice integration methods \citep{l2000variance}, and have been used in machine learning applications such as lowering the variance of randomized kernel approximations \citep{yang2014quasi} and neural latent variable models \citep{buchholz2018quasi}. Quasi-Monte Carlo has not been applied to the standard neural autoregressive discrete distributions we describe here, to our knowledge.

\section{Background}

\subsection{Arithmetic Coding}
\label{sec:coding}

An \emph{arithmetic code} is an optimal lossless coding scheme---that is, a coding scheme with minimal expected code length---for when the exact joint distribution of the data is known.
Given a total ordering of the items being encoded and defining $w_i = \sum_{j<i} P(X=x_j)$ the cumulative probability of item $x_i$, an arithmetic code for $x_i$ is a number in the interval $(w_i,w_{i+1})$.
To represent this code as a sequence of bits it's usual to pick a rational number in the interval $(w_i,w_{i+1})$ whose  binary fraction representation requires a small number of digits.
Larger intervals then tend to contain more numbers with short representations.
Decoding an arithmetic code $c$ requires finding the unique value of $i$ such that $w_i < c < w_{i+1}$.
In this way, codewords are assigned with a length which is inversely (logarithmically) proportional to the probability of an outcome, providing an optimal compression rate for the average message, roughly equivalent to the entropy of the distribution. 
\begin{definition}[Arithmetic codebook]
By a slight abuse of terminology, we will use the term \emph{arithmetic codebook} to refer not only to the map from symbols to binary fractional representations, but also to the map from symbols to subintervals of the unit interval, $f: V \to 2^{[0,1]}$.
\end{definition}
An example of an arithmetic codebook is he most common method for sampling from categorical distributions in practice. First one constructs a codebook assigning each symbol to a subinterval of the unit interval, and then samples uniformly at random from the unit interval and inverts the map to find the symbol sampled.

\subsection{Randomized Quasi-Monte Carlo}

A common problem when working with probability distributions is to compute the expectation of functions under the distribution.
The family of Monte Carlo algorithms is commonly used for this purpose.
A simple Monte Carlo algorithm for estimating the expectation of a function $s$ under a probability distribution is to first obtain $n$ i.i.d. samples $x_i$ from the distribution and then approximate $E[s(X)] \sim \frac{1}{n}\sum_i s(x_i) $. Without loss of generality, these methods are often formulated in terms of evaluating an expectation of a function defined on the unit hypercube.
\begin{align}
\label{eqn:expectation}
    \mathds{E}[s(X)] &= \int_{x \in [0,1]^d} s(x)dx \\&\approx \frac{1}{N}\sum_i^N s(u_i), ~~u_i \sim \text{Uniform}([0,1]^d) \nonumber
\end{align}
Because many integrals can be interpreted as expectations of functions of a uniform distribution on the unit hypercube, Monte Carlo algorithms have been fairly useful for numerical integration as 
alternatives to quadrature methods which approximate functions using grids and other regular structures, often providing higher expected accuracy when controlling for computational cost.

When dependent random variables are used in Monte Carlo estimation of probabilistic quantities, it is commonly called \emph{Randomized Quasi-Monte Carlo} (RQMC).
These methods include the use of low-discrepancy sequences, lattice rules, antithetic sampling, and stratification, among others \citep{l2016randomized}. In this work, we are most concerned with lattice-based RQMC \citep{l2000variance} methods, which use perturbed lattices. A simple lattice-based RQMC rule replaces the uniform sampling in Equation \ref{eqn:expectation} with a regular lattice of points that has been randomly shifted by a single uniform random vector.
\begin{align}
\label{eqn:rqmc-expectation}
    \mathds{E}[s(X)] \approx \frac{1}{N}\sum_i^N s(l_i + u), ~~u \sim \text{Uniform}([0,1]^d).
\end{align}

\subsection{Ancestral Sampling}
\label{sec:ancestral-sampling}
While Section \ref{sec:coding} makes it clear that given a codebook for a discrete distribution, sampling from that distribution can be done by simply generating a uniform random number, in practice  explicitly constructing such a codebook is often not feasible as for example in probabilistic sequence models the codomain includes all sequences of symbols up to some large length. Instead, one models the joint probability of a sequence of tokens as the product of the conditional probabilities of each token given all of the preceding tokens,
\begin{align}
\label{eqn:ancestral-probability}
P(X_{T},...,X_1) = \Pi_{t=0}^{T-1} P(X_{T} | X_1,...,X_{t} ).
\end{align}
Each of these conditional probability functions can then be modeled using a neural network.
Analogously, sampling in large language models is done through \emph{ancestral sampling}, wherein each token is sampled successively from the conditional probability after conditioning on all previous tokens,
\begin{align}
\label{eqn:ancestral-sampling}
x_T \sim P(X_{T} | X_1,...,X_{T-1} ).
\end{align}
\begin{definition}[Prefix of a sequence]
For a sequence of symbols $x_1, ..., x_T$, we call a contiguous subsequence $x_1,...,x_t$ for $t<T$ a \emph{prefix} of the sequence.
\end{definition}
When working with these probabilistic sequence models, it is natural to think in terms of prefixes. In fact, implicit in our definition of the ancestral sampling scheme and product-of-conditionals architecture is that it allows us to compute probabilities not only over complete sequences, but also over partial prefixes, i.e. the probability $P(X_1=x_1, ..., X_T=x_T)$ is the sum of probabilities of every sequence longer than $T$ sharing that prefix. There are two ways that practical neural sequence models distinguish between a prefix and a complete sequence. The first, common in decoder-only models, is to decode every sequence to some maximum length, and define the distribution as applying only to sequences of that length. The second is to include a special $\text{EOS}$ (end-of-sentence) token, and define a sequence as complete if it ends with $\text{EOS}$. So the prefix $(x_1,...,x_T)$ is distinguished from the sequence $(x_1,...,x_T, \text{EOS})$. Padding tokens are added to the end of the sequence after $EOS$ to make every sequence of a uniform length.

In our work we exploit this prefix structure in order to construct an alternative to ancestral sampling for these neural sequence models.

\section{Method}

The core idea of our method is to improve the diversity of samples by (1) reinterpreting an ancestral sampling scheme as defining an arithmetic codebook, where distance in code space correlates (in a sense to be made precise later) with prefix distance in sentence space, and (2) using non-IID random numbers to sample from the codebook. This allows us to guarantee that the codewords are ``far apart'' in code space, while preserving unbiasedness of our estimation. 

\subsection{Constructing the codebook}

The algorithm has a geometric flavor and will be easiest to follow while making reference to the toy example in Figure \ref{fig:toy_diagram}.
As noted in Section \ref{sec:ancestral-sampling}, in real world sequence models over a vocabulary $V$, it is impractical to explicitly construct an arithmetic codebook (a mapping from sequences to disjoint subintervals of $[0,1]$). What we will demonstrate here is that it is possible to \emph{implicitly} define an arithmetic codebook for a given sequence model such that we can (1) given a sequence or prefix, compute the corresponding interval in the codebook, and (2) given a point (a ``code'') in the unit interval, to compute the the corresponding sequence. Further, these computations can be done with complexity no greater than that of normal likelihood computation or  ancestral sampling.

Without loss of generality, we can assume the sequences all have uniform length $L$ as described in Section \ref{sec:ancestral-sampling}. Given an ordered vocabulary $V$, we use the standard dictionary ordering on $V^L$. Given two sequences $(a_1, a_2, ..., a_L)$ and $(b_1, b_2, ..., b_L)$, their ordering depends on the order between the two symbols in the first place $i$ on which the two sequences differ.

Because our dictionary ordering puts all sequences sharing a given prefix into contiguous blocks, we can define the codebook in terms of prefixes and only lazily materialize the codes for longer prefixes as we need them.

Concretely, we compute the CDF of the first token in the sequence $w_{i_1} = \sum_{j<i_1} P(X_1=v_j)$, and assign to each choice of prefix $X_1=v_{i_1}$ the subinterval $(w_{i_1}, w_{i_1+1})$. All codes for sequences starting with $v_{i_1}$ will lie in this interval. We recursively define the codebook for prefixes of length two, computing 
\begin{align*}
    \label{eqn:codebook}
    &w_{i_1i_2} = w_{i_1} + \\&~~~~P(X_1=v_{i_1}) \sum_{j<i_2} P(X_2=v_j|X_1=v_{i_1})
\end{align*}
This gives the subinterval corresponding to sequences that start with $v_{i_1}v_{i_2}$. We extrapolate the following formula:
\begin{align}
    &w_{i_1...i_L} = w_{i_1..i_{L-1}} +\nonumber\\&~~~~\sum_{j<i_{L}} P(X_1=v_{i_1},...,X_{L-1}=v_{i_{L-1}}, X_L=v_j)
\end{align}
For a given sequence or prefix $i_1...i_{L}$, we assign it to the subinterval $(w_{i_1...i_L}, w_{i_1...i_L+1})$. By inspecting this equation we can see several things: 
\begin{itemize}
    \item The intervals defined by $w_i$'s are a valid codebook for the space of sequences. The subintervals corresponding to any given prefix are disjoint from those that do not share that prefix, so every sequence ends up in a disjoint interval, and the length of each interval is exactly the probability of the sequence.
    \item The computation of a code for a given sequence has the same FLOPS as evaluating the probability of that sequence under the model using Equation \ref{eqn:ancestral-probability}. The only probabilities involved in computing the $w_i$'s all involve the conditional probabilities used to calculate a single prefix and can be computed step-by-step.
    \item Given a code point in the unit interval, discovering its subinterval requires the same FLOPS as standard ancestral sampling using Equation \ref{eqn:ancestral-sampling}. This is described in Algorithm \ref{alg:sampling-from-code} and follows the same recursive construction as used to define the codebook in Equation \ref{eqn:codebook}.
\end{itemize}
An immediate corollary of the third point is
\begin{proposition}
\label{prop:sampling_thm}
If the code point $c$ is chosen randomly from the unit interval, Algorithm \ref{alg:sampling-from-code} samples from the distribution $P(X)=P(X1,...,X_T)$.
\end{proposition}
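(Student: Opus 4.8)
The plan is to argue that Algorithm~\ref{alg:sampling-from-code} realizes exactly the inverse of the arithmetic codebook map, so that decoding a uniform code point $c$ amounts to inverse-transform (inverse-CDF) sampling over the space of sequences. The three bullet points preceding the proposition already supply the structural facts I need: the sequence intervals $(w_{i_1\dots i_L},\, w_{i_1\dots i_L+1})$ are pairwise disjoint, each has length exactly $P(X_1=v_{i_1},\dots,X_L=v_{i_L})$, and the recursion in Equation~\ref{eqn:codebook} is precisely the computation Algorithm~\ref{alg:sampling-from-code} carries out. So the core of the argument is just to convert ``interval length equals probability'' into ``selection probability equals probability.''

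First I would establish that the sequence intervals form a partition of $[0,1]$ up to a set of measure zero. Disjointness is given by the first bullet; what remains is coverage, which follows because the interval lengths are the sequence probabilities and these sum to $1$ over all complete length-$L$ sequences. Hence the only points of $[0,1]$ not interior to some sequence interval are the countably many shared endpoints $\{w_{i_1\dots i_L}\}$, a Lebesgue-null set. Since $c$ is drawn uniformly, it lands in this null set with probability zero, so almost surely the decoding is well defined and returns a unique sequence.

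Next I would verify that Algorithm~\ref{alg:sampling-from-code} outputs the unique sequence $x$ whose interval contains $c$; I expect this to be the step that requires the most care, and I would handle it by induction on the sequence length. At the first step the algorithm selects the token $v_{i_1}$ with $w_{i_1} < c < w_{i_1+1}$, and because conditioning a uniform variable on the event $\{c \in (w_{i_1}, w_{i_1+1})\}$ yields a uniform variable on that subinterval, the affine rescaling of the subinterval back to $[0,1]$ produces a fresh uniform code. The recursive definition of $w_{i_1 i_2}$ then shows that this rescaled problem is exactly arithmetic decoding against the conditional distribution $P(\,\cdot \mid X_1 = v_{i_1})$, so the inductive hypothesis applies to the remaining $L-1$ tokens and the composed decode lands in the interval for $x$ iff $c$ does.

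Finally, combining these observations yields the output distribution directly. For any sequence $x = (v_{i_1},\dots,v_{i_L})$,
\begin{align*}
P(\text{output} = x) &= P\!\left(c \in (w_{i_1\dots i_L},\, w_{i_1\dots i_L+1})\right)\\
&= w_{i_1\dots i_L+1} - w_{i_1\dots i_L} = P(X = x),
\end{align*}
where the first equality is the decoding correctness just established, the second uses the uniformity of $c$ together with the almost-sure partition property, and the third is the interval-length identity from the first bullet point. This is precisely the claim.
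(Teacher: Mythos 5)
Your proof is correct and takes essentially the same route as the paper, which offers no separate proof but states the proposition as an immediate corollary of the codebook construction: Algorithm~\ref{alg:sampling-from-code} inverts the codebook map, whose interval lengths equal sequence probabilities, so a uniform code point selects each sequence with exactly its probability. Your additions---the null-set argument for the interval endpoints and the induction showing that the recursive rescaling tracks the conditional codebook---simply fill in details the paper leaves implicit.
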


\begin{remark}
Conditioned on a set of codes $c_1,...,c_N$, sampling using Algorithm \ref{alg:sampling-from-code} is embarrassingly parallel across sequences. Because it uses the same FLOPS and follows the same structure, we find that this has zero appreciable computational overhead in practice compared to standard sampling. We further discuss practicalities of parallel LLM inference in Section \ref{sec:parallelization-experiments} and Appendix \ref{app:parallelization}.
\end{remark}

\begin{algorithm}[tb]
   \caption{Sampling from a Code Point}
   \label{alg:sampling-from-code}
\begin{algorithmic}
   \STATE {\bfseries Input:} code point $c \in [0,1]$, ordered vocabulary $V$,\\~~~sequence model $P(X_1,...,X_T)$
   \STATE set $c_0 = c$, $X = \{\}$, $t = 0$
   \REPEAT
   \STATE set $w_i = \sum_{j < i} P(X_t = v_j | X_{< t})$\\~~~for $v_j \in V$
   \STATE set $X_t = v_i$\\~~~s.t. $w_i < c_t < w_{i+1}$
   \STATE set $(m, M) = (w_i, w_{i+1})$\\~~~s.t. $w_i < c_t < w_{i+1}$
   \STATE set $c_{t+1} = \frac{c_t - m}{M - m}$
    \STATE set $t = t + 1$
   \UNTIL{$X_{t-1} = \text{EOS}$}
   \STATE return $X$
\end{algorithmic}
\end{algorithm}

\subsection{Sampling consistency and bias}
\label{sec:consistency-and-bias}

So far we have only reproduced the standard algorithm for ancestral sampling using an additional latent uniform variable (which is similar to most practical implementations). This latent variable however lets us introduce some structure in how we pick our codes.
 
A naive codebook of maximal diversity can be obtained by dividing the unit interval in a regular lattice. That is, for $N$ codes, we pick so $c_i$ is the $i$'th quantile $i/N$. Since this is deterministic, Proposition 1 does not apply, but this gives a consistent estimator.
\begin{proposition}[Naive arithmetic sampling] 
\label{prop:naive-estimator}
Let a set of $N$ codes be picked such that $c_i=i/(N+1)$. Let $f(c_i)$ represent the sequence obtained from applying Algorithm \ref{alg:sampling-from-code} to $c_i$. Let $s$ be a function from the codomain of $X$ to $\mathbb{R}$. Then
\begin{align}
\label{eqn:naive-estimator}
\lim_{N \to \infty} \frac{1}{N} \sum_i s(f(c_i)) = \mathds{E}[s(X)],
\end{align}
that is, the estimator $\frac{1}{N} \sum_i s(f(c_i))$ is consistent.
\end{proposition}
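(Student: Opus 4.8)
The plan is to recognize that the composed function $g(c) := s(f(c))$ is a simple (piecewise-constant) function on $[0,1]$ and that the naive estimator is nothing but an equally-spaced quadrature rule for its integral. Concretely, the codebook construction partitions the unit interval (up to the finitely many endpoints $w_x$) into subintervals $I_x = (w_x, w_{x+1})$ indexed by the sequences $x \in V^L$, with length $|I_x| = P(X=x)$; by the definition of Algorithm \ref{alg:sampling-from-code}, $f$ maps every point of $I_x$ to the sequence $x$, so $g \equiv s(x)$ on $I_x$. Because the sequences all have uniform length $L$, there are only finitely many such intervals and $g$ is a bounded step function.

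First I would record the target integral. Proposition \ref{prop:sampling_thm} states that $f(U) \sim P(X)$ for $U \sim \text{Uniform}([0,1])$, so $\int_0^1 g(c)\,dc = \mathds{E}[s(f(U))] = \mathds{E}[s(X)] = \sum_x P(X=x)\, s(x)$. It therefore suffices to show that the lattice average converges to this integral.

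The key step is equidistribution of the regular lattice. For any subinterval of $[0,1]$ of length $\ell$, the number of grid points $c_i = i/(N+1)$, $i = 1,\dots,N$, lying inside it equals the number of integers in an interval of length $\ell(N+1)$, which is $\ell N + O(1)$; hence the empirical frequency $\frac{1}{N}\#\{i : c_i \in I_x\}$ converges to $|I_x| = P(X=x)$ for each fixed sequence $x$. Writing $\frac{1}{N}\sum_i g(c_i) = \sum_x s(x)\,\frac{1}{N}\#\{i : c_i \in I_x\}$ and using that the sum over $x$ is finite (so the limit passes through it), the average converges to $\sum_x s(x) P(X=x) = \mathds{E}[s(X)]$, as claimed.

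The main obstacle is the bookkeeping around the discontinuities of $g$: a grid point may land exactly on one of the boundaries $w_x$, where $f$ is ambiguous. I would dispose of this by noting that there are only finitely many such boundary values, so for each $N$ at most finitely many indices $i$ are affected, and any convention for assigning $f$ at those points perturbs the average by $O(1/N)$ times a bound on $|s|$, which vanishes. The same finiteness of $V^L$ makes the $O(1)$ error in the equidistribution count uniform across all the intervals $I_x$, which is precisely what licenses exchanging the limit with the finite sum over $x$. Equivalently, and more abstractly, one may simply invoke that a bounded step function is Riemann integrable and that equally-spaced Riemann sums of a Riemann-integrable function converge to its integral.
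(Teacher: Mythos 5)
Your proposal is correct and follows essentially the same route as the paper's proof: both identify $s(f(c))$ as a step function on $[0,1]$ whose integral equals $\mathds{E}[s(X)]$ by the pushforward property (Proposition \ref{prop:sampling_thm}), and both conclude via convergence of equally spaced quadrature, which your closing remark makes explicit and which is exactly the paper's Riemann-sum argument. Your explicit equidistribution counting and handling of grid points landing on interval boundaries is a more careful spelling-out of that same convergence step, not a different approach.
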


This deterministic method is biased, however. 

\begin{example}
Consider the distribution over sequences of length one, over two symbols. Let the first symbol $A$ have probability $0.6$ and the second symbol $B$ have probability $0.4$. The naive arithmetic sampling scheme for $N=1$ would have us compute our sample mean with the code $c_1 = 0.5$, which is biased.
\end{example}

A simple technique to remove this bias it to add a single random uniform shift to each code point and taking the result modulo 1. This is a \emph{randomly shifted lattice rule} in the RQMC literature.

\begin{definition}[Arithmetic sampling]
Let $\{c_i\}$ be a set of N codes given by $\frac{i}{N+1} + b~~ \text{mod}~~ 1$ where $b \sim U(0,1)$ is a shared uniform random sample. Apply algorithm \ref{alg:sampling-from-code} to each of the codes to receive sequences $\{ x_i \}$.
\end{definition}

\begin{proposition}
\label{prop:final-estimator-unbiased}
Let $\{x_i\}$ be a set of sequences picked by arithmetic sampling. Let $s$ be a function from the codomain of $X$ to $\mathbb{R}$. Then
\begin{align*}
\mathds{E}[\frac{1}{N} \sum_i s(x_i)] = \mathds{E}[s(X)]\\
\plim_{N \to \infty} \frac{1}{N} \sum_k s(x_i) = \mathds{E}[s(X)]
\end{align*}
that is, the estimator $\frac{1}{N} \sum_i s(x_i)$ is unbiased and consistent.
\end{proposition}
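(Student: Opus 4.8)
The plan is to reduce both claims to statements about the single real-valued step function $g := s \circ f$ on the unit interval, where $f$ is the decoding map of Algorithm \ref{alg:sampling-from-code}. First I would record the structural facts supplied by the codebook construction: since $V$ and $L$ are finite there are finitely many complete sequences, and $f$ is constant on each codebook subinterval, so $g$ is a \emph{bounded} step function with finitely many discontinuities, taking the value $s(x)$ on the subinterval assigned to a sequence $x$, which has width exactly $P(X=x)$. Consequently $\int_0^1 g(c)\,dc = \sum_x P(X=x)\,s(x) = \mathds{E}[s(X)]$; this is just the integral form of Proposition \ref{prop:sampling_thm}, namely that $g(c)$ has mean $\mathds{E}[s(X)]$ when $c \sim U(0,1)$. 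In particular $g$ is Riemann integrable, which is all I will need downstream.

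For unbiasedness I would use only that each code is \emph{marginally} uniform. Fixing $i$, the code $c_i = \tfrac{i}{N+1} + b \bmod 1$ is the image of $b \sim U(0,1)$ under a rotation of the circle $\mathbb{R}/\mathbb{Z}$, and the uniform law is rotation invariant, so $c_i \sim U(0,1)$ marginally. Hence $\mathds{E}[s(x_i)] = \mathds{E}[g(c_i)] = \int_0^1 g = \mathds{E}[s(X)]$ for every $i$, and linearity of expectation gives $\mathds{E}[\tfrac{1}{N}\sum_i s(x_i)] = \mathds{E}[s(X)]$. This argument never invokes independence of the codes, only that they are a deterministic lattice shifted by a shared uniform offset.

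For consistency I would argue pointwise in the shift. For any fixed $b$, the points $\{\tfrac{i}{N+1}+b \bmod 1 : 1 \le i \le N\}$ are equally spaced on the circle with gaps $\tfrac{1}{N+1}$, except for a single doubled gap where the omitted $i=0$ node would sit, so their mesh tends to $0$. Writing $\tfrac{1}{N}\sum_i g(c_i) = \tfrac{N+1}{N}\cdot \tfrac{1}{N+1}\sum_i g(c_i)$, the second factor is a Riemann sum of the periodic, Riemann-integrable function $g$ with vanishing mesh and therefore converges to $\int_0^1 g = \mathds{E}[s(X)]$, while $\tfrac{N+1}{N}\to 1$. Thus the sample average converges to $\mathds{E}[s(X)]$ for \emph{every} value of $b$, which is far stronger than convergence in probability and immediately yields the claimed $\plim$. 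One could instead bound the estimator variance and apply Chebyshev, but that forces control of the covariances of the dependent codes; the deterministic Riemann-sum route sidesteps this entirely.

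The only delicate point, and the step I expect to require the most care, is the behaviour of the Riemann sum at the finitely many discontinuities of $g$, since nodes may land on or near an interval boundary $w_{i_1 \ldots i_L}$. This is resolved by the standard fact that a bounded function with finitely many discontinuities is Riemann integrable, so its equally-spaced sums converge irrespective of where nodes fall within cells; quantitatively, the nodes within distance $\delta$ of a discontinuity number $O(\delta N)$ and contribute $O(\delta)$ to the average because $g$ is bounded, and $\delta$ can be taken arbitrarily small. Everything else is rotation invariance, linearity, and elementary Riemann-sum convergence.
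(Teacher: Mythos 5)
Your proposal is correct and follows essentially the same route as the paper's own proof: unbiasedness from the fact that each shifted code $c_i = \tfrac{i}{N+1}+b \bmod 1$ is marginally $U(0,1)$ (rotation invariance) together with linearity of expectation, and consistency from the observation that for \emph{every} fixed offset $b$ the sample average is a vanishing-mesh Riemann sum of the step function $g = s\circ f$, whose integral equals $\mathds{E}[s(X)]$ by the pushforward argument of Proposition \ref{prop:naive-estimator}. The only difference is that you make explicit some details the paper leaves implicit, namely the $\tfrac{N+1}{N}$ normalization correction, the doubled gap in the shifted lattice, and the treatment of the finitely many discontinuities of $g$.
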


Proofs of these statements can be found in Appendix \ref{app:bias-consistency}.

\subsection{Metric properties and diversity}

Intuitively, more evenly spaced out codes in code space should lead to more diverse sequences and/or more even coverage of the set of sequences. We note the following

\begin{proposition}[Monotonicity]
Consider an arithmetic codebook mapping code points in $[0,1]$ onto a space $V^L$ containing sequences of length $L$. Let $(c_3, c_4) \subset (c_1, c_2)$ be two intervals in $[0,1]$. Let $x_i = f(c_i)$ be the sequence decoded from code point $c_i$, and let $d(x_i, x_j) = L - p$ where $p$ is the length of the maximum matching prefix of the two sequences (this is a distance in a prefix tree). Then we have $d(x_1, x_2) > d(x_3, x_4)$, that is, the the function $f$ is a monotonic (order-preserving) embedding between the poset of subsets of $[0,1]$ ordered by set inclusion, and the poset of subsets of $V^L$ ordered by set radius under the prefix distance metric.
\end{proposition}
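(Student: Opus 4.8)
The plan is to recast the codebook construction as a sequence of nested partitions of $[0,1]$ and to exploit the resulting tree (ultrametric) structure. For each prefix length $t$, let the level-$t$ cells be the subintervals $(w_{i_1\dots i_t},\, w_{i_1\dots i_t+1})$ assigned to the length-$t$ prefixes, and call this partition $\mathcal{P}_t$. By the recursive definition of the $w$'s, $\mathcal{P}_{t+1}$ refines $\mathcal{P}_t$: every level-$(t+1)$ cell sits inside the level-$t$ cell of its parent prefix. Unrolling Algorithm \ref{alg:sampling-from-code} shows that the length-$t$ prefix of $f(c)$ is exactly the prefix labeling the cell of $\mathcal{P}_t$ that contains $c$; the rescaling step $c_{t+1} = (c_t-m)/(M-m)$ merely re-expresses ``which child cell does $c$ fall into.'' Consequently, for any two points $a,b$, the length of the longest common prefix of $f(a)$ and $f(b)$ equals the largest $t$ for which $a$ and $b$ lie in a common cell of $\mathcal{P}_t$; in particular $d$ pulled back through $f$ is an ultrametric, so on any image set radius equals diameter.

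For a subset $S\subseteq[0,1]$ define $\ell(S)$ to be the largest $t$ such that $S$ is contained in a single cell of $\mathcal{P}_t$ (the depth of the smallest codebook node whose interval contains $S$), with $\ell(S)=0$ if only the root contains it. The core observation is monotonicity of $\ell$: if $S'\subseteq S$, then every cell containing $S$ also contains $S'$, so $\ell(S')\ge\ell(S)$. Since all points of $S$ share the length-$\ell(S)$ prefix, every pair in $f(S)$ is within prefix distance $L-\ell(S)$, so the radius (equal to the diameter, by the ultrametric remark) of $f(S)$ is at most $L-\ell(S)$; I will argue it is \emph{exactly} $L-\ell(S)$ whenever $S$ is an interval. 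Granting this, $S'\subseteq S$ yields $\operatorname{radius}(f(S'))=L-\ell(S')\le L-\ell(S)=\operatorname{radius}(f(S))$, which is precisely the order-preservation claim.

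To land the stated endpoint inequality, I would show that for an interval the diameter is attained by its two endpoints. Let $k=\ell((c_1,c_2))$. Since $(c_1,c_2)$ is not contained in any single cell of $\mathcal{P}_{k+1}$, some boundary of $\mathcal{P}_{k+1}$ lies strictly interior to $(c_1,c_2)$; hence $c_1$ and $c_2$ lie in different level-$(k+1)$ cells while sharing a level-$k$ cell, so $f(c_1)$ and $f(c_2)$ agree on exactly $k$ symbols and $d(x_1,x_2)=L-k=\operatorname{radius}(f((c_1,c_2)))$. The identical argument gives $d(x_3,x_4)=L-\ell((c_3,c_4))$, and combining with the monotonicity of $\ell$ delivers $d(x_1,x_2)\ge d(x_3,x_4)$.

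The main obstacle is this endpoint-attains-radius step, which is exactly where the ordered (dictionary) structure of the codebook enters: it is what guarantees the most-separated pair in the image of an interval can be taken to be the boundary points rather than some interior pair. I also expect to spend care on degenerate cases: probability-zero symbols produce empty cells that must be pruned from $\mathcal{P}_t$, and boundary points (a measure-zero set) need a tie-breaking convention consistent with Algorithm \ref{alg:sampling-from-code}. Finally, I would flag that the natural inequality is $\ge$ rather than strict: strict containment $(c_3,c_4)\subset(c_1,c_2)$ does not force a strictly deeper enclosing cell (both intervals can straddle the same branch point), so the strict version holds only under the extra assumption that $(c_3,c_4)$ lies inside a strictly deeper cell than $(c_1,c_2)$; the order-preserving-embedding conclusion itself requires only the non-strict version.
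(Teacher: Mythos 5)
The paper never actually proves this proposition: it is asserted in the ``Metric properties and diversity'' subsection with no inline argument, and the appendices contain proofs only for the bias/consistency and step-function-variance results. So there is no paper proof to compare yours against, and your write-up would be filling a genuine gap. Judged on its own merits, your argument is sound: the refining-partition formulation of the codebook, the identification of the longest common prefix of $f(a),f(b)$ with the deepest cell of $\mathcal{P}_t$ containing both points, the ultrametric radius-equals-diameter observation, the monotone depth function $\ell$, and the branch-point argument that an interval not contained in a single level-$(k+1)$ cell must have its endpoints in distinct children of the common level-$k$ cell are all correct, modulo the measure-zero boundary/tie-breaking caveats you explicitly flag (plus one you don't: the endpoints $c_1,c_2$ lie in the closure of the open interval, not the interval itself, so ``radius of $f(S)$'' and ``$d(x_1,x_2)$'' need a sentence to be reconciled).

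Your closing caveat is the most important point, and you are right: the strict inequality in the proposition is false as stated. Concretely, take the uniform binary model with $P(A)=P(B)=1/2$ at every position, $(c_1,c_2)=(0.1,0.9)$ and $(c_3,c_4)=(0.4,0.6)$; both intervals straddle the branch point $1/2$, so $x_1,x_3$ begin with $A$, $x_2,x_4$ begin with $B$, and $d(x_1,x_2)=d(x_3,x_4)=L$. Only $d(x_1,x_2)\ge d(x_3,x_4)$ can hold in general, and that non-strict version is exactly what ``order-preserving embedding'' requires. One simplification worth adopting: instead of patching tie-breaking and endpoint attainment case by case, observe that any decoding convention consistent with Algorithm \ref{alg:sampling-from-code} makes $f$ monotone with respect to dictionary order, and that in dictionary order any two sequences' common prefix is inherited by every sequence lying between them; applying this to $f(c_1)\le f(c_3)\le f(c_4)\le f(c_2)$ gives $d(x_1,x_2)\ge d(x_3,x_4)$ in one line, sidestepping both of the delicate steps in your argument while your partition machinery can then be reserved for the poset-embedding phrasing of the conclusion.
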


The choice of a prefix-ordered code space is of course necessitated by our construction of the codebook from the ancestral sampling scheme but it might be beneficial in practical applications, where overlapping prefixes often indicate simple or extraneous variations on the same word (such as pluralization and other inflections), while different prefixes indicate entirely different words.

We can also precisely characterize when arithmetic sampling will yield duplicate samples and prefixes.
\begin{proposition}
Arithmetic sampling with size $N$ will never sample the same prefix $x$ more than $n$ times if $P(X=x) < n/(N+1)$.
\end{proposition}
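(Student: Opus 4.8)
The plan is to translate the statement about duplicate prefixes into a statement about how many code points can land inside a single codebook interval, and then to bound that count using the minimum spacing of the shifted lattice. First I would recall from the codebook construction that every prefix $x$ is assigned a contiguous subinterval $I_x = (w_x, w_x + P(X=x))$ of the unit interval whose length is exactly $P(X=x)$, and that Algorithm~\ref{alg:sampling-from-code} decodes a code $c$ to a sequence having $x$ as a prefix if and only if $c \in I_x$. Consequently the number of sampled sequences sharing the prefix $x$ equals the number of the $N$ codes $c_1,\dots,c_N$ that fall in $I_x$, so it suffices to show that $I_x$ contains at most $n$ codes whenever $P(X=x) < n/(N+1)$.

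Next I would establish the key geometric fact that any two distinct codes are separated by at least $1/(N+1)$. It is cleanest to view the codes on the circle $\mathbb{R}/\mathbb{Z}$: the set $\{i/(N+1)\}_{i=1}^N$ is a piece of a lattice whose consecutive points are exactly $1/(N+1)$ apart, and the common shift $b$ acts as a rotation, which preserves all circular distances. Writing $c_i - c_j \equiv (i-j)/(N+1) \pmod 1$ makes this explicit, since for $i \ne j$ this is a nonzero multiple of $1/(N+1)$; hence the minimum circular separation of $c_1,\dots,c_N$ is $1/(N+1)$.

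With these two ingredients the proof is a short packing argument. Viewing $I_x$ as an arc of length $\ell = P(X=x) < 1$ and listing the codes inside it in increasing order along the arc as $p_1 < \dots < p_k$, each consecutive gap $p_{j+1}-p_j$ is at least $1/(N+1)$, since a smaller gap would be the shorter of the two arcs joining those codes and would therefore equal their circular distance, contradicting the separation bound. Therefore the codes inside $I_x$ span a length of at least $(k-1)/(N+1)$, and because both endpoints lie strictly inside the open interval this span is strictly less than $\ell < n/(N+1)$; hence $(k-1)/(N+1) < n/(N+1)$, i.e.\ $k \le n$, which is exactly the claim.

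I expect the main obstacle to be the careful handling of the modular wraparound: one must make sure that reduction modulo $1$ cannot bring two codes closer than $1/(N+1)$ in the unit interval, and that an arc shorter than the full circle orders its interior points so that the along-arc gaps are genuinely bounded below by the circular separation. The strict inequality $P(X=x) < n/(N+1)$ in the hypothesis is what lets the argument tolerate codes sitting arbitrarily close to the interval boundary, so I would keep that inequality strict throughout rather than collapsing it to a non-strict bound.
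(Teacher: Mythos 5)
Your proof is correct and takes essentially the same route as the paper's: identify the prefix $x$ with its codebook interval of length $P(X=x)$, note the codes have minimum spacing $1/(N+1)$, and conclude by a packing argument that more than $n$ codes cannot fit. You simply spell out the details the paper leaves implicit (the modular wraparound and the ordering of codes within the arc), which is a welcome tightening rather than a different argument.
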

\begin{proof}
Each sequence $x$ maps to a contiguous interval with length equal to $P(X=x)$. Since each code word is $1/(N+1)$ away from any other code word, more than $n$ of them cannot fit in an  interval of that size  and thus cannot generate the same sequence.
\end{proof}
\begin{proposition}
Arithmetic sampling with size $N$ must always sample a prefix $x$ at least $n$ times if $P(X=x) > n/(N+1)$.
\end{proposition}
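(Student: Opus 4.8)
The plan is to run the pigeonhole argument of the preceding proposition in reverse: instead of bounding how many equally spaced codes can be crammed into a short interval, I want to \emph{force} a lower bound on how many must land inside a long one. As before, the prefix $x$ is assigned the contiguous interval $(w_x, w_{x+1}) \subseteq [0,1]$ whose length is exactly $P(X=x)$, and after the shared shift the codes sit at $c_i = \frac{i}{N+1} + b \bmod 1$, which are spaced $\delta = 1/(N+1)$ apart. The core of the argument is the elementary lattice-counting fact that a set of points spaced $\delta$ apart must deposit at least $n$ points into any interval of length strictly greater than $n\delta$: if such an interval contained only $k \le n-1$ codes, those codes would split it into at most $k+1$ sub-arcs each lying inside a single inter-code gap, giving total length at most $(k+1)\delta \le n\delta$, contradicting $\ell = P(X=x) > n\delta$.

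Concretely, I would first fix the shift $b$ and describe the code set as a regular lattice of spacing $1/(N+1)$ on the circle $[0,1)$. Second, I would apply the counting fact to the interval of length $\ell = P(X=x) > n/(N+1) = n\delta$, concluding it contains at least $n$ codes and hence that $x$ is decoded at least $n$ times. Because this holds for every value of $b$, it holds unconditionally, so no averaging over the shift is required and the statement is genuinely a ``must,'' matching the ``never'' of the dual proposition.

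The step I expect to be the real obstacle is the wrap-around induced by the $\bmod\ 1$ reduction. Reducing the shifted lattice modulo $1$ places the $N$ codes at $N$ of the $N+1$ evenly spaced positions on the circle; the position at $b$ is absent, so the true gap structure has one doubled gap of width $2/(N+1)$ rather than perfectly even spacing. An interval of length just above $n/(N+1)$ that straddles this doubled gap could contain only $n-1$ codes, so the bare pigeonhole count is delicate at the boundary. To make it airtight I would instead count against the \emph{full} $(N+1)$-point lattice $\{ \frac{i}{N+1} + b \bmod 1 : i = 0,\dots,N\}$, which is perfectly evenly spaced and to which the clean counting fact applies, and then control the single omitted code. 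This reduces the whole difficulty to the bookkeeping question of whether the missing lattice point falls inside $(w_x,w_{x+1})$, and handling that case (either by sharpening the count to $n+1$ full lattice points or by using that a codebook interval is an honest subinterval of $[0,1]$ that does not wrap past $0$) is where the genuine care lies, rather than in the counting itself.
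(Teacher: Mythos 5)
Your opening pigeonhole argument is exactly the paper's proof, and the obstacle you isolate in your final paragraph is real---but neither of your proposed repairs closes it, and in fact nothing can: the wrap-around case you describe is a genuine counterexample to the proposition as stated, not bookkeeping. After the mod-$1$ shift, the $N$ codes occupy $N$ of the $N+1$ evenly spaced positions on the circle, and the vacant position moves uniformly with $b$; it is \emph{not} pinned at $0$, so the observation that a codebook interval is an honest subinterval of $[0,1]$ that never wraps past $0$ does not keep the vacant position out of it. Your other fix, counting $n+1$ full-lattice points, would require the stronger hypothesis $P(X=x) > (n+1)/(N+1)$. Concretely, take $N = 1$, $n = 1$, and a prefix of probability $0.6 > 1/2$ whose codebook interval is $(0.2, 0.8)$; the position of the single code is uniform in $[0,1)$ as $b$ varies, so for suitable $b$ it sits at $0.9$ and the prefix is sampled zero times, violating the claim. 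In general, whenever the vacant lattice position lands inside the prefix's interval and that interval contains exactly $n$ full-lattice positions, only $n-1$ codes fall inside. What is actually true is: at least $n-1$ samples under the stated hypothesis; at least $n$ samples under the strengthened hypothesis $P(X=x) > (n+1)/(N+1)$; and the proposition as literally written holds for the \emph{unshifted} lattice of Proposition \ref{prop:naive-estimator} ($c_i = i/(N+1)$), since there the vacant circular position is $0 \equiv 1$, which a codebook interval can never contain in its interior.

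For comparison, the paper's own proof is essentially just your first paragraph: each code word is $1/(N+1)$ away from its nearest other code word, hence at least $n$ of them must appear in an interval of length greater than $n/(N+1)$ by pigeonhole. That inference is precisely what your third paragraph shows to be invalid: nearest-neighbor distance $1/(N+1)$ controls the \emph{minimum} gap, which is all that the preceding ``at most $n$ times'' proposition needs, but the ``at least $n$ times'' direction needs the \emph{maximum} gap (including boundary effects) to be at most $1/(N+1)$, and the random shift creates one gap of width $2/(N+1)$. So your diagnosis of where the care is needed is sharper than the paper's treatment; the gap in your proposal is that the case you deferred as a ``bookkeeping question'' cannot be handled---it falsifies the statement---and the correct resolution is to weaken the conclusion or strengthen the hypothesis as above.
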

\begin{proof}
The code book maps each sequence $x$ to a contiguous interval with length equal to $P(X=x)$. Since each code word is $1/(N+1)$ away from its nearest other code word, at least $n$ of them must appear in an interval of length greater than $n/(N+1)$ by the pigeonhole principle.
\end{proof}

\subsection{Variance of estimator}

We would like to characterize the variance reduction properties of this method when used as an estimator. While this is not simple to characterize in general, we show a simple and reasonably realistic class of functions for which shifted lattice rules reduce variance, and offer a heuristic argument that real world quantities of interest for estimation, such as BLEU, exhibit these properties. This offers a partial theoretical explanation for the variance reduction properties we observe in our empirical experiments. 

\begin{definition}[Step function]
A function $f: [0,1] \to \mathbb{R}$ is a \emph{step function} if it is a finite linear combination of indicator functions of disjoint intervals, which cover all of $[0,1]$
\begin{align*}
    f(c) = \sum_i^N a_i \mathds{1}_{[x_i^\wedge, x_i^\vee]}(c)
\end{align*}
\end{definition}

Note that the function whose expectation we desire to estimate, $s(f(c))$, is a step function.
\begin{align*}
    s(f(c)) = \sum_i^N f(x_i) \mathds{1}_{f^{-1}(x_i)}(c) 
\end{align*}

\begin{proposition}
Let $f$ be a step function, and let the width of each interval $x_i^\vee - x_i^\wedge$ be a multiple of $1/N$. Then arithmetic sampling with $N$ samples exactly estimates this expectation with zero variance.
\end{proposition}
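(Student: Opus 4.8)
The plan is to show that the randomly shifted lattice estimator returns the \emph{exact} integral for every realization of the shift $b$, so that its variance is trivially zero. Write $g(c) = s(f(c)) = \sum_i a_i \mathds{1}_{I_i}(c)$ for the step function of interest, where the disjoint intervals $I_i = [x_i^\wedge, x_i^\vee]$ cover $[0,1]$ and each has width $x_i^\vee - x_i^\wedge = k_i/N$ with $k_i$ a nonnegative integer (the multiple-of-$1/N$ hypothesis); since the $I_i$ tile $[0,1]$ we have $\sum_i k_i = N$. The target is $\mathds{E}[s(f(c))] = \int_0^1 g(c)\,dc = \sum_i a_i (k_i/N) = \tfrac{1}{N}\sum_i a_i k_i$, while the estimator is $\hat\mu = \tfrac{1}{N}\sum_{j=1}^N g(c_j)$, where $c_1,\dots,c_N$ are the $N$ equally spaced codes with spacing $1/N$ displaced by the common shift $b \sim U(0,1)$ (this is the lattice for which the hypothesis is meaningful). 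Equating $\hat\mu$ with the integral for all $b$ is the whole game.

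The key step is an exact equidistribution lemma for the shifted rank-1 lattice: for any $b$, any half-open interval of length $1/N$ contains exactly one code point. I would prove this by a rotation argument. Writing $m = \lfloor Nb \rfloor$ and $r = b - m/N \in [0, 1/N)$, one checks that $c_j = (j/N + b) \bmod 1 = ((j+m)\bmod N)/N + r$, so the multiset $\{c_j : j = 0,\dots,N-1\}$ equals $\{i/N + r : i = 0,\dots,N-1\}$ because $j \mapsto (j+m)\bmod N$ is a bijection of residues. Exactly one element of this set lands in any fixed half-open length-$1/N$ window, independent of $b$. Partitioning an interval of width $k_i/N$ into $k_i$ consecutive length-$1/N$ windows then shows it contains exactly $k_i$ code points.

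With the lemma in hand the conclusion is immediate: the number of codes falling in $I_i$ is exactly $k_i$, so the estimator evaluates to $\hat\mu = \tfrac{1}{N}\sum_i a_i k_i$, which coincides with the integral computed above. This identity holds for every admissible shift $b$, so $\hat\mu$ is an almost-surely constant random variable equal to the true mean; both its bias and its variance vanish.

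The main obstacle is boundary bookkeeping. The intervals are written closed, so I must argue that the measure-zero set of shifts for which a code lands exactly on a shared endpoint (i.e.\ $b$ an integer multiple of $1/N$, since all endpoints are multiples of $1/N$ by hypothesis) does not affect the count: this event has probability zero, so it contributes nothing to the expectation defining the variance, and off this set the half-open versus closed distinction is irrelevant. The only other point requiring care is that the hypothesis ``width a multiple of $1/N$'' is precisely what aligns the perfect equidistribution of the spacing-$1/N$ lattice with the step structure; it is this exact alignment, rather than any averaging-out of fluctuations, that forces the variance to be identically zero.
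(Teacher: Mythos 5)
Your proof is correct and follows essentially the same route as the paper's: the paper likewise subdivides each width-$k_i/N$ interval into $k_i$ windows of width $1/N$ and observes that exactly one shifted lattice point lands in each, so the estimator equals the integral for every shift. Your version simply makes the ``exactly one point per window'' step rigorous (via the rotation/residue bijection) and handles the measure-zero boundary cases, where the paper just invokes the pigeonhole principle.
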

\begin{proof}
Subdivide every interval component of size $n/N$ (wlog) up into $n$ intervals of width $1/N$. Then 
exactly one randomly shifted lattice point will end up in each interval by the pigeonhole principle.
\end{proof}

Though it is unsurprising that taking exactly the right number $N$ will give zero variance, adding more samples may actually increase the variance a bit as some intervals end up with more points in them than others. Let's examine what happens to this term when we add one extra lattice point, that is, we integrate using $N+1$ samples.

\begin{proposition}
Let $f$ be a step function, and let the width of each interval $x_i^\vee - x_i^\wedge$ be a multiple of $1/N$. Then arithmetic sampling with $N+1$ samples has lower variance than the naive Monte Carlo estimator.
\end{proposition}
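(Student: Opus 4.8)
The plan is to compute the variance of the randomly shifted lattice estimator exactly and compare it against the naive Monte Carlo variance at the same budget of $N+1$ draws, namely $\sigma^2/(N+1)$ where $\sigma^2 = \mathrm{Var}(f(U))$ for $U \sim \mathrm{Uniform}[0,1]$. Throughout I would use the regular $(N+1)$-point lattice $\{u + j/(N+1) \bmod 1\}_{j=0}^{N}$ of spacing $1/(N+1)$ with a single shared shift $u$ (the same family of lattices as in the preceding zero-variance proposition, now with one extra point). Since every interval width is a multiple of $1/N$, the function $f$ is constant, say equal to $v_\ell$, on each of the $N$ cells $[\ell/N, (\ell+1)/N)$; writing $S = \sum_{\ell} v_\ell$ we have $\mu = S/N$ and $\sigma^2 = \frac1N\sum_\ell v_\ell^2 - \mu^2$.

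First I would establish a structural fact: for almost every $u$, each cell contains exactly one or two lattice points. A cell has width $1/N$ and the spacing is $1/(N+1)$, and because $1 < (N+1)/N < 2$, an interval of width $1/N$ always holds at least one and at most two equally spaced points. Distributing $N+1$ points among $N$ cells, each receiving one or two, forces by pigeonhole exactly one doubled cell $\ell^{\ast}(u)$, with the remaining $N-1$ cells singly covered. The estimator therefore collapses to $\hat\mu(u) = \frac{1}{N+1}\bigl(S + v_{\ell^{\ast}(u)}\bigr)$, so all of its randomness is carried by which cell is doubled.

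The main obstacle is to pin down the law of $\ell^{\ast}(u)$, and I expect to show it is uniform over the $N$ cells for $u \sim \mathrm{Uniform}[0, 1/(N+1))$. Cell $\ell$ is the doubled one precisely when some lattice point lands in its left sub-interval $A_\ell = [\ell/N,\, \ell/N + \tfrac{1}{N(N+1)})$ of width $1/(N(N+1))$, so that the next point, a distance $1/(N+1)$ further on, still lies in the cell. To evaluate this probability cleanly I would use that $(u, j) \mapsto u + j/(N+1) \bmod 1$ is a measure-preserving bijection from $[0, 1/(N+1)) \times \{0, \dots, N\}$ onto $[0,1)$; hence the $u$-measure for which some lattice point hits a set of length below $1/(N+1)$ equals that length. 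This yields doubled-probability $(N+1)\cdot\frac{1}{N(N+1)} = \frac{1}{N}$ for each cell, i.e. uniformity, and therefore $v_{\ell^{\ast}}$ has exactly the law of $f(U)$ and $\mathrm{Var}(v_{\ell^{\ast}}) = \sigma^2$.

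Combining the pieces gives $\mathrm{Var}(\hat\mu) = \frac{1}{(N+1)^2}\mathrm{Var}(v_{\ell^{\ast}}) = \frac{\sigma^2}{(N+1)^2}$, which is exactly a factor $1/(N+1)$ below the naive Monte Carlo variance $\sigma^2/(N+1)$, hence strictly smaller whenever $f$ is non-constant. I would flag the uniformity of the doubled cell as the only genuine difficulty; the rest is bookkeeping. As an independent cross-check I might note the standard RQMC identity $\mathrm{Var} = \sum_{m \neq 0} |\hat f(m(N+1))|^2$ for a shifted lattice rule: a step function aligned to $1/N$ has Fourier coefficients vanishing at frequencies divisible by $N$ and a discrete-transform factor that is $N$-periodic, which forces $|\hat f(m(N+1))|^2 = (N+1)^{-2}|\hat f(m)|^2$ and recovers the same $\sigma^2/(N+1)^2$ by Parseval.
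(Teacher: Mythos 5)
Your proof is correct, and it reaches the paper's conclusion by a genuinely different and quantitatively stronger route. Both arguments hinge on the same structural fact: with $N+1$ shifted lattice points and the $N$ aligned cells of width $1/N$, every cell receives one or two points, exactly one cell is doubled, and the doubled cell is uniform over the $N$ cells --- this is precisely the paper's three-scenario bookkeeping with probabilities $1/N$, $1/N$, $(N-2)/N$ and counts $(2,1,N-2)$, $(1,2,N-2)$, $(1,1,N-1)$. But the two proofs exploit it differently. The paper feeds it into a covariance decomposition: it writes $\mathrm{Var}[\hat\mu]$ as the i.i.d.\ term $\mathrm{Var}[f(u)]/(N+1)$ plus $\frac{1}{(N+1)^2}\sum_{i\neq j}\mathrm{Cov}[f(c_i),f(c_j)]$, computes the summed covariances for pairs of cell indicators ($c_{\mathrm{on}}=1/N-1$ on the diagonal, $c_{\mathrm{off}}=1/N$ off it), and shows the coefficient matrix $c_{\mathrm{off}}\mathds{1}\mathds{1}^\top+(c_{\mathrm{on}}-c_{\mathrm{off}})I$ is negative semidefinite (eigenvalues $0,-1,\dots,-1$), concluding only that the correction term is nonpositive. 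You instead condition on which cell is doubled, collapse the estimator to $\frac{1}{N+1}\bigl(S+v_{\ell^\ast}\bigr)$ with $\ell^\ast$ uniform, and read off the exact variance $\sigma^2/(N+1)^2$; this buys a sharper statement (a full factor of $N+1$ below the naive $\sigma^2/(N+1)$, not merely ``no larger''), and it makes explicit that strict improvement requires $f$ non-constant --- the same degeneracy that surfaces in the paper as the zero eigenvalue along $\mathds{1}$. Indeed, the paper's own quantities imply your formula (its covariance term equals $-N\sigma^2/(N+1)^2$), though it never states it. Your Fourier cross-check, $\hat f(m(N+1))=\hat f(m)/(N+1)$ for $1/N$-aligned step functions, is a third route the paper does not take, and it is also sound. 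One convention mismatch is worth flagging: the paper's appendix takes $c_i=u+i/N \bmod 1$ for $i=0,\dots,N$, so its ``extra'' point exactly duplicates the first, whereas you take $N+1$ distinct points with spacing $1/(N+1)$; for step functions aligned to the $1/N$ grid the two lattices induce the same law on cell counts (one uniformly random doubled cell), so the two proofs address the same estimator in distribution, and your reading of ``$N+1$ samples'' is arguably the more natural one.
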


The proof of this statement is provided in Appendix \ref{app:variance-rqmc}.

As we add more lattice points, we can expect the variance to change even more, but the analysis of the addition of one extra lattice point is instructive as to how we can expect sampling on a shifted lattice to improve over naive Monte Carlo for these kinds of functions.

Practical functions on the space of sequences obviously have too many interval components, which are not perfect multiples of $1/N$, for these results to apply directly. However, if we assume that the vast majority of probability mass lies on a reasonably small number of sequences, and the value of $f$ outside of such sequences is quite low, such as when calculating BLEU score for a trained translation model, the function $s(f(c))$ should be very well approximated by a function satisfying these conditions with a reasonably fine mesh. Small differences in approximation error could still result in significant increases in estimator variance, but our empirical results suggest this is not necessarily the case for real world applications. We leave analysis of the exact approximation error and effects of additional lattice points to future work.

\section{Experiments}

\newcommand{\singleplot}[5]{
\begin{subfigure}[t]{0.3\linewidth}
\centering
\begin{tikzpicture}[scale=1.0]
    \begin{axis}[
    scale only axis,
    width=0.87\textwidth,
    height=0.9\textwidth,
    xlabel={$n$-gram diversity},
    mark size=2pt,
    solid,
    ymajorgrids=true,
    xmajorgrids=true,
    grid style=dashed,
    legend pos=south west,
    xlabel style = {font=\small},
    ylabel style = {font=\small},
    ticklabel style = {font=\small},
    max space between ticks=20
    ]
    \addlegendimage{empty legend}\addlegendentry{$N = #2$}
    \addplot [mark=none, color=black, dotted, line width=2] table[row sep=crcr] {
    #3 #5\\
    #4 #5\\
    };
    \addplot [color=tcolor,mark=square*,x filter/.expression={and(\thisrow{topk} == #1, \thisrow{num_decodes} == #2) ? x : nan}] table[x=4gram_diversity_macro,y=mean_sentence_bleu, col sep=comma]{results/t5_base_topk2_topk10_topk0_bleu_vs_diversity_t.csv};
    \addplot [color=tcolor,mark=triangle*,mark size=3pt,x filter/.expression={and(\thisrow{topk} == #1, \thisrow{num_decodes} == #2) ? x : nan}] table[x=4gram_diversity_macro,y=max_sentence_bleu, col sep=comma]{results/t5_base_topk2_topk10_topk0_bleu_vs_diversity_t.csv};
    
    \addplot [color=acolor,mark=square*,x filter/.expression={and(\thisrow{topk} == #1, \thisrow{num_decodes} == #2) ? x : nan}] table[x=4gram_diversity_macro,y=mean_sentence_bleu, col sep=comma]{results/t5_base_topk2_topk10_topk0_bleu_vs_diversity_a.csv};
    \addplot [color=acolor,mark=triangle*,mark size=3pt,x filter/.expression={and(\thisrow{topk} == #1, \thisrow{num_decodes} == #2) ? x : nan}] table[x=4gram_diversity_macro,y=max_sentence_bleu, col sep=comma]{results/t5_base_topk2_topk10_topk0_bleu_vs_diversity_a.csv};
    \end{axis}
\end{tikzpicture}
\end{subfigure}
}

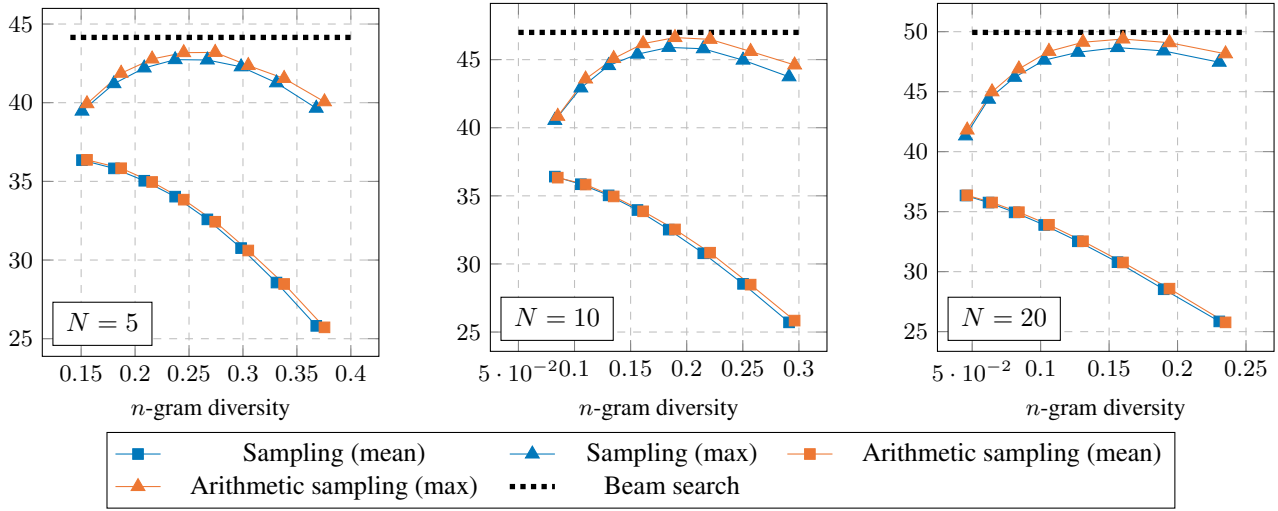
\begin{figure*}
    \centering
    \singleplot{0}{5}{0.14}{0.4}{44.15}
    \hfill
    \singleplot{0}{10}{0.05}{0.3}{47.0}
    \hfill
    \singleplot{0}{20}{0.05}{0.25}{49.94}

    \begin{tikzpicture}
        \begin{customlegend}[
            legend columns=3,
            legend style={
                align=left,
                column sep=2ex
            },
            legend entries={Sampling (mean), Sampling (max), Arithmetic sampling (mean), Arithmetic sampling (max), Beam search}
        ]
            \addlegendimage{mark=square*,solid,color=tcolor}
            \addlegendimage{mark=triangle*,mark size=3pt,solid,color=tcolor}
            \addlegendimage{mark=square*,solid,color=acolor}
            \addlegendimage{mark=triangle*,mark size=3pt,solid,color=acolor}
            \addlegendimage{mark=none,line width=2,dotted,color=black}
        \end{customlegend}
    \end{tikzpicture}    
    \caption{BLEU score on the WMT14 EnFr test set vs $n$-gram diversity.}
    \label{fig:t5_base_topk0_bleu_vs_diversity}
\end{figure*}

\let\singleplot\undefined
\newcommand{\singleplot}[5]{
\begin{subfigure}[t]{0.3\textwidth}
\centering
\begin{tikzpicture}
    \begin{axis}[
    scale=0.6,
    xlabel={#3},
    ylabel={BLEU estimate},
    mark=x,
    ymajorgrids=true,
    xmajorgrids=true,
    grid style=dashed,
    legend pos={#4},
    xlabel style = {font=\small},
    ylabel style = {font=\small},
    ticklabel style = {font=\small},
    ytick = {#5}
    ]
    \addlegendimage{empty legend}\addlegendentry{$T = #2$}
    \addplot [name path=lowerbound,color=tcolor,no markers,x filter/.expression={and(\thisrow{temperature} == #2, \thisrow{index} == #1) ? x : nan}] table[x=num_decodes,y=mean_sentence_bleu_025p, col sep=comma]{results/t5_base_topk0_sentence_BLEU_vs_sample_size_1500_2000_2500_t.csv};
    \addplot [name path=upperbound,color=tcolor,no markers,x filter/.expression={and(\thisrow{temperature} == #2, \thisrow{index} == #1) ? x : nan}] table[x=num_decodes,y=mean_sentence_bleu_975p, col sep=comma]{results/t5_base_topk0_sentence_BLEU_vs_sample_size_1500_2000_2500_t.csv};
    \addplot[fill=tcolor, forget plot, fill opacity=0.25] fill between[of=lowerbound and upperbound];
    
    \addplot [name path=lowerbound,color=acolor,no markers,x filter/.expression={and(\thisrow{temperature} == #2, \thisrow{index} == #1) ? x : nan}] table[x=num_decodes,y=mean_sentence_bleu_025p, col sep=comma]{results/t5_base_topk0_sentence_BLEU_vs_sample_size_1500_2000_2500_a.csv};
    \addplot [name path=upperbound,color=acolor,no markers,x filter/.expression={and(\thisrow{temperature} == #2, \thisrow{index} == #1) ? x : nan}] table[x=num_decodes,y=mean_sentence_bleu_975p, col sep=comma]{results/t5_base_topk0_sentence_BLEU_vs_sample_size_1500_2000_2500_a.csv};
    \addplot[fill=acolor, forget plot, fill opacity=0.25] fill between[of=lowerbound and upperbound];
    \end{axis}
\end{tikzpicture}
\end{subfigure}
}

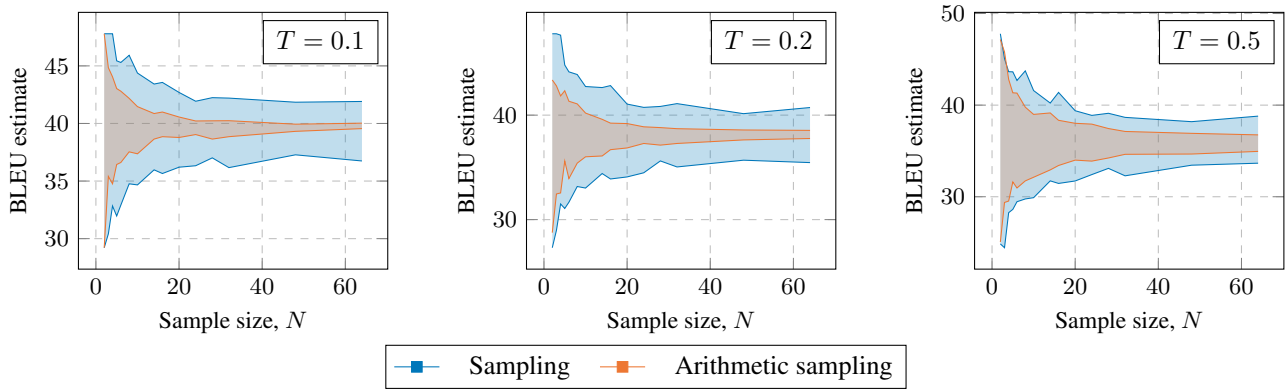
\begin{figure*}
    \centering


    \singleplot{2500}{0.1}{Sample size, $N$}{north east}{}
    \hfill
    \singleplot{2500}{0.2}{Sample size, $N$}{north east}{}
    \hfill
    \singleplot{2500}{0.5}{Sample size, $N$}{north east}{}

    \begin{tikzpicture}
        \begin{customlegend}[
            legend columns=2,
            legend style={
                align=left,
                column sep=2ex
            },
            legend entries={Sampling, Arithmetic sampling}
        ]
            \addlegendimage{mark=square*,solid,color=tcolor}
            \addlegendimage{mark=square*,solid,color=acolor}   
        \end{customlegend}
    \end{tikzpicture}    
    \caption{Sentence BLEU score estimates for a sample sentence (2500 in the test set) estimated using different decoding methods (sampling and arithmetic sampling) for different temperatures $T$. The filled area corresponds to a region from 2.5 to 97.5 percentile of BLEU estimate amongst 100 attempts. We use fine-tuned T5-base model to generate predictions.}
    \label{fig:sentence_bleu_vs_sample_size_1500}
\end{figure*}

\let\singleplot\undefined
\newcommand{\singleplot}[4]{
\begin{subfigure}[t]{0.3\textwidth}
\centering
\begin{tikzpicture}
    \begin{axis}[
    scale=0.6,
    xlabel={Sample size, N},
    ylabel={#4},
    yticklabel={$\pgfmathprintnumber{\tick}$},
    mark=square*,
    ymajorgrids=true,
    xmajorgrids=true,
    grid style=dashed,
    legend pos={#3},
    xlabel style = {font=\small},
    ylabel style = {font=\small},
    ticklabel style = {font=\small}
    ]
    \addlegendimage{empty legend}\addlegendentry{#2}
    
    \addplot [color=tcolor,mark=square*,x filter/.expression={and(\thisrow{datasetid} == #1, \thisrow{methodid} == 0) ? x : nan}] table[x=num_decodes,y=mean_sentence_bleu_std, col sep=comma]{results/t5_base_topk0_metric_sd_vs_sample_size.csv};

    \addplot [color=acolor,mark=square*,x filter/.expression={and(\thisrow{datasetid} == #1, \thisrow{methodid} == 1) ? x : nan}] table[x=num_decodes,y=mean_sentence_bleu_std, col sep=comma]{results/t5_base_topk0_metric_sd_vs_sample_size.csv};

    \end{axis}
\end{tikzpicture}
\end{subfigure}
}

\begin{figure*}
    \centering
    
    \singleplot{0}{WMT14 EnFr}{north east}{Avg. BLEU estimate SD}
    \hfill
    \singleplot{1}{WMT16 EnRo}{north east}{Avg. BLEU estimate SD}
    \hfill
    \singleplot{2}{CNN/DM}{north east}{Avg. ROUGE2 estimate SD}

    \begin{tikzpicture}
        \begin{customlegend}[
            legend columns=3,
            legend style={
                align=left,
                column sep=2ex
            },
            legend entries={Sampling, Arithmetic sampling}            
        ]
            \addlegendimage{mark=square*,solid,color=tcolor}
            \addlegendimage{mark=square*,solid,color=acolor}   
        \end{customlegend}
    \end{tikzpicture}    
    \caption{Average standard deviation of various estimates (sentence BLEU for WMT14 EnFr and WMT16 EnRo, and ROUGE2 for CNN/DailyMail) for the two sampling methods averaged over 100 samples. For each sample we compute estimate 100 times and then calculate standard deviation. 
    }
    \label{fig:metric_sd_vs_sample_size}
\end{figure*}
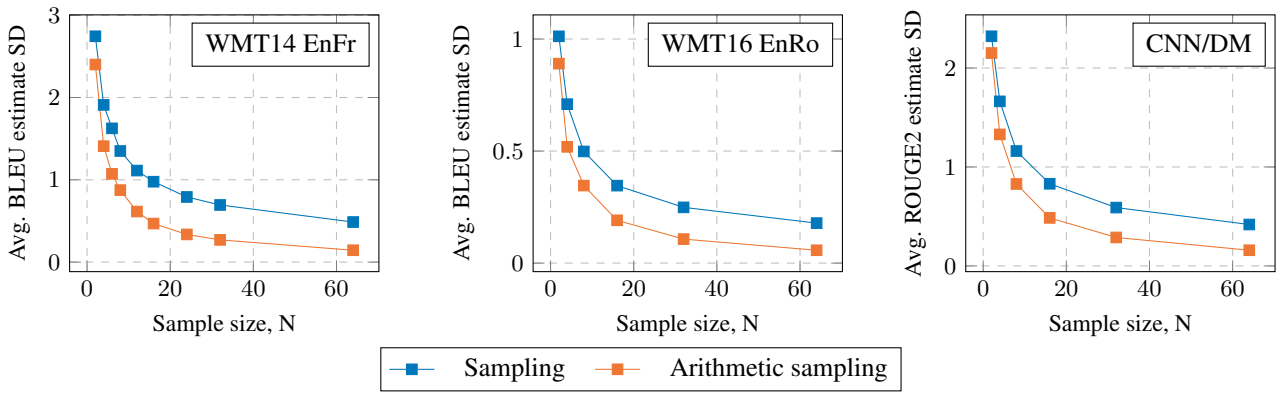

\let\singleplot\undefined
\let\pctreductionplot\undefined

We perform several experiments on sequence-to-sequence models trained for machine translation (WMT14 English-French and WMT16 English-Romanian) and summarization (CNN DailyMail) tasks following a setup similar to \citep{t5}. We use the T5-base model \citep{t5} initialized from a public pre-trained checkpoint, which we fine-tune on each task separately using the T5X library \cite{t5x}. We do 260,000 fine-tuning steps with batch size 128. Using a beam search with beam size 4 leads to 42.39 BLEU score on the WMT14 EnFr test set, 28.34 BLEU score on EnRo, and 20.75 ROUGE-2 on CNN DailyMail.

For the subsequent sampling experiments we run all methods on equivalent accelerator platforms. The computational considerations of running large scale LLM inference with different algorithms possessing different parallelism and synchronization properties are discussed further in Section \ref{sec:parallelization-experiments} and Appendix \ref{app:parallelization}.

\subsection{Diverse beam generation}

We compare arithmetic sampling to normal ancestral sampling on the task of generating multiple diverse translations for a single input sentence. In the context of a real-world system, this kind of diverse generation is often useful when combined with a \emph{reranker} model that can select the best translation from a set of candidates. We control diversity using the softmax temperature parameter $T=0.1,0.2,...,0.8$. Following \cite{kool2019stochastic}, within the set of $K$ translations generated by each sampling method, we take the average, min, and max of the BLEU score against the average $n$-gram diversity score, and then macro average this across the whole test set. The $n$-gram diversity score is defined as $d=\sum_{n=1}^{4} d_n$ where
\begin{align*}
    d_n = \frac{\text{\# of unique $n$-grams in $N$ translations}}{\text{total \# of $n$-grams in $N$ transations}}.
\end{align*}
Results on WMT14 English-French for $N=5,10,20$ are documented in Figure \ref{fig:t5_base_topk0_bleu_vs_diversity}. We can see that for a given temperature setting, arithmetic sampling achieves comparable BLEU score on average to regular sampling, with slightly improved diversity. This is expected, because our sampling method is unbiased and cannot mathematically do better than standard sampling when simply averaging results.

However, when we allow an oracle to pick the highest-scoring element of the beam, arithmetic sampling achieves nearly one additional point of BLEU, showing that the diversity is not just random noise but is actually helping to generate better candidates. This oracle experiment is relevant to the case where we have a reranker and wish to generate the best set of candidates.

We include additional results for WMT16 English-Romanian in Appendix \ref{app:additional-experiments} (see Figure~\ref{fig:t5_base_enro_topk0_bleu_vs_diversity}).

\paragraph{Comparison to beam search} One way to think of arithmetic sampling is as trying to approach the diversity of a search-based method while remaining embarrassingly parallel. For beam search, we found the maximum BLEU score within a beam, averaged over examples, to be 44.15, 47.0, and 49.94, respectively, for beam sizes of 5, 10, and 20. Temperature does not affect beam search, but for ancestral sampling and arithmetic sampling, in order to get a single number across the whole curve of temperatures, we simply took the maximum over all temperatures and beam elements (not necessarily the temperature for which arithmetic sampling most outperforms ancestral sampling), and averaged over examples. For arithmetic sampling, these numbers were 43.20, 46.62, and 49.39, and for ancestral sampling, 42.74, 45.90, and 48.68. This means that arithmetic sampling closes the gap between standard ancestral sampling and beam search by 33\%, 63\%, and 56\% respectively for those beam sizes in this particular oracle setting, without requiring any more synchronization.

\paragraph{Top-k sampling} We conduct experiments on combining arithmetic sampling with the \emph{top-$k$} modification, wherein all conditional probabilities besides the $k$ highest are zero'd out during the ancestral sampling scheme. Moreover, arithmetic sampling is compatible with all such inference schemes that modify conditional logits, such as nucleus sampling and typical decoding. We see that standard sampling and arithmetic sampling perform similarly with the top-$k$ modification for $k \in \{2, 10\}$ as they do without the modification, with arithmetic sampling providing a boost to the maximum BLEU score (full results in Appendix \ref{app:additional-experiments}). 

\subsection{Estimator variance reduction}

In addition to generating diverse beams for reranking, our method is also useful for constructing reduced-variance estimators. We estimate the conditional expected sentence-level BLEU score of the trained translation and ROUGE-2 for summarization models for various input sentences. In addition to demonstrating statistical properties, such sentence-level estimators are useful for training regimes that directly optimize averaged metrics as an expected reward, such as directly optimizing BLEU score for translation models. \citep{sentencebleutraining, 45610}. More recently, training LLMs with expected reward has become especially important in the context of reward functions learned from human annotators, known as RLHF (reinforcement learning from human feedback) \cite{christiano2017deep, rlhfsum}, used in e.g. the InstructGPT model \cite{rlhf}.

Let $M$ be a some sentence-level estimator, for example sentence BLEU or ROUGE-2 score. The expected estimator value for a given sentence $x$ with ground truth reference labels $y^*$ is the conditional expectation
\begin{align*}
    \mathds{E}[\text{M}(Y, y^*)|X=x] = \sum_{y} \text{M}(y,y^*)P(Y = y| X=x),
\end{align*}
where $P(Y = y | X=x)$ is defined by the model. 
We compare traditional ancestral sampling to arithmetic sampling by using each to construct a sample mean estimator for these expectations as in Section \ref{sec:consistency-and-bias}. 

Following \cite{kool2019stochastic} we compute the estimate 100 times for a sample test sentence. The results for WMT14 English-French are shown in Figure~\ref{fig:sentence_bleu_vs_sample_size_1500}. We vary the temperature parameter $T = 0.1, 0.2, 0.5$ and the sample size from 2 to 64.  We report the empirical 2.5-th and 97.5-th percentiles to demonstrate the variance. We present results on three other sentences in Appendix \ref{app:additional-experiments}, and see similar results, most noticeable in the low temperature regime. This is consistent with our theory as the lower temperature regimes put probability mass on a smaller number of higher-BLEU sequences, especially when using a base-sized T5 model that can easily degenerate at higher sampling temperatures, as we see in Figure \ref{fig:t5_base_topk0_bleu_vs_diversity}.

Finally, we average the standard deviations of expected BLEU and ROUGE-2 scores (computed over 100 runs as above) over 100 sample sentences from the WMT and CNN DailyMail test sets, comparing standard sampling with arithmetic sampling. The results are plotted in Figure \ref{fig:metric_sd_vs_sample_size}, and we see that the standard deviation under arithmetic sampling is always lower, and is cut in half once we reach $N \approx 16$ samples.

\begin{figure*}[h!]
    \hspace{-5pt}
        \begin{tikzpicture}[scale=1.0]
            \begin{axis}[
            scale only axis,
            width=0.9\textwidth,
            height=0.42\textwidth,            
            ylabel={Minimal number of chips per replica},
            xlabel={Sample size},
            mark=x,
            xmode=log,
            ymajorgrids=true,
            xmajorgrids=true,
            xminorticks=true,
            grid style=dashed,
            legend columns=2,
            legend cell align=left,
            legend pos={north west},
            ]
            \addplot[color=green,mark size=1pt,line width=2] table {
                1 1 
                2 1
                4 1
                8 1
                16 1
                32 1
                64 2 
                128 2
                256 4
                512 4
                1024 8
            };
            \addplot[color=green,dotted,mark size=1pt,line width=2] table {
                1 1 
                2 1
                4 1
                8 1
                16 1
                32 1
                64 1
                128 2
                256 2
                512 2
                1024 4
            };
            \addplot[color=yellow,mark size=1pt,line width=2] table {
                1 1 
                2 1
                4 1
                8 1
                16 1
                32 2
                64 2 
                128 2
                256 4
                512 8
            };     
            \addplot[color=yellow,dotted,mark size=1pt,line width=2] table {
                1 1 
                2 1
                4 1
                8 1
                16 1
                32 1
                64 2 
                128 2
                256 2
                512 4
            };            
            \addplot[color=red,mark size=1pt,line width=2] table {
                1 1 
                2 1
                4 1
                8 1
                16 2
                32 2
                64 2 
                128 4
                256 8
            };
            \addplot[color=red,dotted,mark size=1pt,line width=2] table {
                1 1 
                2 1
                4 1
                8 1
                16 1
                32 2
                64 2 
                128 2
                256 2
            };
            \addplot[color=blue,mark size=1pt,line width=2] table {
                1 1 
                2 1
                4 2
                8 2
                16 2
                32 4
                64 4 
                128 8
            };
            \addplot[color=blue,dotted,mark size=1pt,line width=2] table {
                1 1 
                2 1
                4 1
                8 2
                16 2
                32 2
                64 4 
                128 4
            };
            \legend{128 len BS,128 len AS,256 len BS,256 len AS,512 len BS,512 len AS,1024 len BS,1024 len AS}        
            \end{axis}
        \end{tikzpicture}  
    \caption{Each solid line represents the minimal \# of TPU v4 chips in a single platform to perform a beam search (BS) with a given number of samples (beam size). Each dotted line represents the minimal \# of TPU v4 chips in a platform replica to perform arithmetic sampling (AS) with a given number of samples, while maintaining the same overall samples/chip efficiency as beam search across replicas.}
    \label{fig:chips_vs_num_decodes}
\end{figure*}
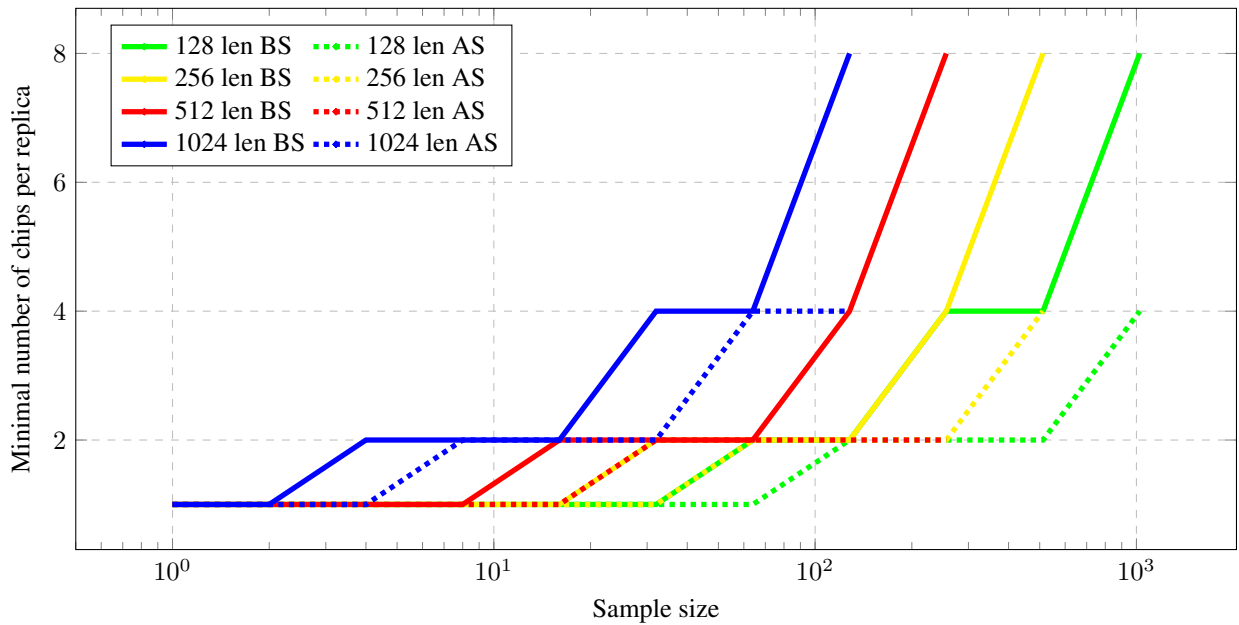

\subsection{Parallelization properties of arithmetic sampling}
\label{sec:parallelization-experiments}
As discussed in more detail in Appendix \ref{app:parallelization}, exactly how arithmetic sampling and beam search compare in terms of parallelism depends on the topology of the underlying hardware setup. To investigate the parallel properties of arithmetic sampling on real hardware, we use the publicly available mt5 XXL model in the T5X library \citep{t5x}, a 13B parameter encoder-decoder model, and between 1 and 8 Google Cloud TPU v4 accelerator chips with 32GB memory each, arranged in a 3D toroidal topology of either 1x1, 1x2x1, 2x2x1, or 2x2x2. For different sizes of models, sample sizes, and other (especially less powerful) accelerators, the exact numbers will differ, but this is a fairly realistic setting at the time of publication. We examine the number of chips in a topology needed to perform beam search or sampling with 1,2,4,8,..,1024 samples, for model input/output sizes of 128, 256, 512, and 1024, up to the full 2x2x2 topology.

In Figure \ref{fig:chips_vs_num_decodes}, we plot the minimum \# of chips in a topology needed to get a given number of samples for each of the 4 input/output sizes, for beam search and arithmetic sampling.

Each of these scenarios permits a sample size of at least 1 to be drawn using only one v4 chip. As the number of samples increases, we plot the minimum platform size needed by beam search on the y axis, as measured by number of chips connected in either 1x1x1, 1x2x1, 2x2x1, or 2x2x2 topologies (1,2,4, or 8). For arithmetic sampling, since it is embarrassingly parallel and we know at least a single sample can be drawn from a 1x1x1, we technically never need more than one chip to draw any number of samples.

However, this requires an entire copy of the models parameters per 1x1x1 replica, and might be wasteful in some applications. In this spirit, Figure \ref{fig:chips_vs_num_decodes} also presents a set of dotted lines below the solid ones, which represent the minimal \# of chips in a topology necessary for arithmetic sampling to generate that \# of samples, while maintaining the same samples/chip as beam search for that situation.

\section{Conclusion and Future Work}

We introduce a method for parallel sampling from large language models that improves beam diversity and retains unbiased expectations from the original model. It is simple to implement and compatible with many existing modifications to LLM inference. Our method, based on creating a code space that makes it easier to search through sequences, opens up many avenues for future work. 

We are most excited to explore the construction of more sophisticated multi-dimensional coding schemes which are still structured to allow easy sampling, while capturing even more geometric structure latent in the space of sequences, including the construction of coding schemes to improve inference with respect to a given reward function. 

\subsubsection*{Acknowledgements}
We thank David Belanger and the anonymous reviewers for their helpful comments on the paper, as well as Zach Fisher, Santiago Onta{\~n}{\`o}n, Bhargav Kanagal, and Sudeep Gandhe for conversations over the course of the product. We would also like to thank Sharad Vikram and the rest of the JAX and T5X teams who helped with advice and debugging.

\bibliography{paper.bib}
\bibliographystyle{icml2023}

\newpage
\onecolumn
\appendix
\begin{center}
\Large{Appendix}
\end{center}

\section{Computational considerations in parallel inference for LLMs}
\label{app:parallelization}

In discussions of the relative parallelizability of different inference algorithms for LLMs (e.g. our claims that arithmetic sampling is parallel while beam search is not), the careful reader might notice that even beam search itself has many parallel components in its computation. Indeed, at each time step, predictions of the next token can be done for each element of the beam in parallel. However, while it is parallel in this sense, it is not \emph{embarrassingly parallel}, requiring a synchronization step between all beams after predicting each token, in order to sort and broadcast the new beams (called \emph{all-gather}).

Modern large models are spread across 10s, 100s, or even 1000s of accelerator chips connected by very fast memory interconnects, essentially forming a large and topologically complex supercomputer (a \emph{platform}). Bigger models or bigger computation graphs require bigger platforms. Whether something is a beam search which must all happen in shared memory with a synchronization between each step, or an embarrassingly parallel algorithm like arithmetic sampling, it will all happen mostly in parallel, the question is really about the size of platform which it can be used on, which itself is determined by the frequency of synchronizations, because repeated fine-grained synchronizing across the network is prohibitively slow for LLM inference. These concerns also apply to other search-based inference methods such as Gumbel top-k sampling.

Large platforms can be very hard to schedule and very expensive, while embarrassingly parallel jobs can happen on small and cheap platforms. Hence, the motivation for producing embarrassingly parallel algorithms is not just about the speed with which something can be computed, but whether it can be practically computed at all on available hardware. To compute a very large beam search with any standard implementation might require an impractically large platform, whereas any number of samples (or arithmetic sampling samples) can be computed as long as we have enough small platforms available, because we simply need to distribute the RNG seeds (and/or codes) once at the beginning and then proceed completely independently.

\section{Variance for step functions}
\label{app:variance-rqmc}

We can analyze by looking at the contributions to the variance of the estimator. The variance of a sample mean estimator ${\hat \mu} = \frac{1}{N} \sum_i f(c_i)$ over $N+1$ samples $c_i = u + i/N ~\text{mod}~1$ can be decomposed as
\begin{align*}
    \text{Var}[{\hat \mu}] = \frac{\text{Var}[f(u)]}{N+1} + \frac{1}{(N+1)^2} \sum_{i \ne j} \text{Cov}[f(c_i), f(c_j)]
\end{align*}
In order for this variance to be lower than the naive estimator, the covariance term has to be negative. Intuitively, when integrating indicator functions of intervals, this term should be negative, since the values of $f$ for lattice points within the interval are greater than the mean while the values outside are less, and distant lattice points are anticorrelated in terms of which bucket they can fall into. The values for lattice points in the same bucket are co-vary positively though, so we have to ensure the former is a bigger factor than the latter carefully.

Plugging our formula for step functions with $N$ components into this covariance term we get
\begin{align*}
    \sum_{i \ne j} \text{Cov}[\sum_k^N a_k \mathds{1}_{[x_k^\wedge, x_k^\vee]}(c_i), \sum_l^N a_l \mathds{1}_{[x_l^\wedge, x_l^\vee]}(c_j)] =\\
    ~~~~\sum_{i \ne j} \sum_k^N \sum_l^N a_k a_l \text{Cov}[\mathds{1}_{[x_k^\wedge, x_k^\vee]}(c_i), \mathds{1}_{[x_l^\wedge, x_l^\vee]}(c_j)]
\end{align*}
Now, assuming that each interval has a width that is a multiple of $1/N$, without loss of generality, we can analyze the terms corresponding to the entire pair of indicator functions by analyzing what happens with a function of the form
\begin{align*}
    f(c) = \mathds{1}_{[0, \frac{1}{N}]}(c) + \mathds{1}_{[\frac{1}{N}, \frac{2}{N}]}(c).
\end{align*}
The full sum for general step functions will follow by a bit of matrix algebra.
So we need to examine the terms of
\begin{align*}
    \sum_{i \ne j} \big( &\text{Cov}[\mathds{1}_{[0, \frac{1}{N}]}(c_i),  \mathds{1}_{[0, \frac{1}{N}]}(c_j)] + \text{Cov}[\mathds{1}_{[\frac{1}{N}, \frac{2}{N}]}(c_i),  \mathds{1}_{[\frac{1}{N}, \frac{2}{N}]}(c_j)]\\ &+ \text{Cov}[\mathds{1}_{[0, \frac{1}{N}]}(c_i),  \mathds{1}_{[\frac{1}{N}, \frac{2}{N}]}(c_j)] + \text{Cov}[  \mathds{1}_{[\frac{1}{N}, \frac{2}{N}]}(c_i), \mathds{1}_{[0, \frac{1}{N}}(c_j)] \big)
\end{align*}
By symmetry, the first and second terms have the same value, as do the third and fourth, and in fact, with any number of such functions the pairwise terms will always look one way and the self interaction terms the other way. So we are concerned with the values of the expressions
\begin{align}
\label{exp:full-cov-diagonal}
    \sum_{i \ne j} \text{Cov}[\mathds{1}_{[0, \frac{1}{N}]}(c_i),  \mathds{1}_{[0, \frac{1}{N}]}(c_j)]
\end{align}
and
\begin{align}
\label{exp:full-cov-off-diagonal}
    \sum_{i \ne j} \text{Cov}[\mathds{1}_{[0, \frac{1}{N}]}(c_i),  \mathds{1}_{[\frac{1}{N}, \frac{2}{N}]}(c_j)]
\end{align}
We are concerned about the covariance structure of the lattice points, so we examine their behavior jointly. Let's name the intervals as following
\begin{align*}
    L = [0, \frac{1}{N}], R = [\frac{1}{N}, \frac{2}{N}], O = [\frac{2}{N}, 1], E = [\frac{1}{N}, 1]
\end{align*}
For a randomly shifted lattice of $N+1$ points, three different things can happen. Either the extra lattice point shows up in $L$, $R$, or $O$. Since the marginal distribution of a lattice point is uniform, this occurs with probabilities $1/N$, $1/N$, and $(N-2)/N$.
\begin{align*}
    p_1 = \frac{1}{N}, p_2 = \frac{1}{N}, p_3 = \frac{N-2}{N}
\end{align*}
Let $l_i, r_i, o_i, e_i$ be the counts of points in each interval in each scenario (so $e_i = r_i+o_i$), so we have
\begin{align*}
    &(l_1, r_1, o_1, e_1) = (2, 1, N - 2, N-1)\\ 
    &(l_2, r_2, o_2, e_2) = (1, 2, N - 2, N)\\ 
    &(l_3, r_3, o_3, e_3) = (1, 1, N - 1, N)\\ 
\end{align*}
Now we can examine the sum in expression \ref{exp:full-cov-diagonal}, summed over all unequal pairs of variables. Since we are only looking at the first indicator, we are concerned with quantities involving $L$ and $E$ (either within the support of the first indicator, or without it). 

For a given pair of distinct points $(c_i, c_j)$, they are each either in $L$ or $E$. The number of such pairs depends on the configuration $(l,r,o,e)$. Call the number of such pairs $m_{LE}$, etc.
\begin{align*}
    m_{LL} = l(l-1), m_{LE} = m_{EL} = le = l(r+o), m_{EE} = e(e-1) = (r+o)(r+o-1)
\end{align*}
Finally we have the summand, the actual covariance term between the two variables $(c_i, c_j)$. The expectation of the function $\mathds{1}_{[0, \frac{1}{N}]}(c)$ is $\mu_f = \frac{1}{N}$, so the covariance terms for pairs of points look like one of three ways, either the points are both in the support of an indicator, both outside, or mixed.
\begin{align*}
    &s_{LL} = (1 - \frac{1}{N})^2 = \frac{(N-1)^2}{N^2} = s_1\\
    &s_{LE} = s_{EL} = (1 - \frac{1}{N})(0 - \frac{1}{N}) = \frac{1 - N}{N^2} = s_2\\
    &s_{EE} = (0 - \frac{1}{N})^2 = \frac{1}{N^2} = s_3
\end{align*}

So the the sum in expression \ref{exp:full-cov-diagonal} is equal to
\begin{align*}
    &\sum_{i = 1}^3 p_i(m_{LL, i}s_1 + 2m_{LE, i}s_2 + m_{EE, i}s_3)\\
    =& \sum_{i = 1}^3 p_i(l_i(l_i-1)s_1 + 2 l_i(r_i+o_i)s_2 + (r_i+o_i)(r_i+o_i-1)s_3 )\\
    =& \frac{1}{N} - 1 = c_{\text{on}}
\end{align*}
To analyze expression \ref{exp:full-cov-off-diagonal}, we can proceed in a similar way. For a given pair of distinct points $(c_i, c_j)$, they are each either in $L$, $R$, or $O$. The number of such pairs depends on the configuration $(l,r,o,e)$.
\begin{align*}
    &m_{LL} = l(l-1), m_{RR} = r(r-1), m_{OO} = o(o-1) \\
    &m_{LR} = m_{RL} = lr, m_{LO} = m_{OL} = lo, m_{RO} = m_{OR} = ro 
\end{align*}
And the covariance terms look like
\begin{align*}
    &s_{LL} = s_{LO} = s_{RR} = s_{OR} = \frac{1 - N}{N^2} = s_2\\
    &s_{LR} = \frac{(N-1)^2}{N^2} = s_1\\
    &s_{OO} = s_{RL} = s_{OL} = s_{RO} = \frac{1}{N^2} = s_3
\end{align*}
So expression \ref{exp:full-cov-off-diagonal} is equal to
\begin{align*}
    &\sum_{i = 1}^3 p_i(m_{LL, i}s_2 + m_{RR, i}s_2 + m_{OO, i}s_3 + m_{LR, i}s_1 + m_{RL, i}s_3 + m_{LO, i}s_2 + m_{OL, i}s_3 + m_{RO, i}s_3  + m_{OR, i}s_2)\\
    = &\sum_{i = 1}^3 p_i(m_{LR, i}s_1 + (m_{LL, i} + m_{RR, i} + m_{LO, i} + m_{OR, i})s_2 + (m_{OO, i} + m_{RL, i} + m_{OL, i} + m_{RO, i})s_3)\\
    = &\sum_{i = 1}^3 p_i(lrs_1 + (l(l-1) + r(r-1) + lo + ro)s_2 + (o(o-1) + lr + lo + ro)s_3)\\
    = & \frac{1}{N} = c_{\text{off}}
\end{align*}
Now we turn to examine the full sum
\begin{align}
\label{exp:full-sum-cov}
    \sum_{i \ne j} \sum_k^N \sum_l^N a_k a_l \text{Cov}[\mathds{1}_{[x_k^\wedge, x_k^\vee]}(c_i), \mathds{1}_{[x_l^\wedge, x_l^\vee]}(c_j)]
\end{align}
To show that this is nonpositive for any vector of coefficients $a_k$, we need to show that the matrix $C$ with $c_\text{off}$ on the off-diagonal terms and $c_\text{on}$ for the diagonal terms is negative semidefinite.
\begin{align*}
    C = c_\text{off} \mathds{1} \mathds{1}^\top + (c_\text{on}-c_\text{off})I.
\end{align*}
The first term of this summand has one eigenvalue equal to $\frac{N-1}{N}$ and $N-1$ eigenvalues equal to 0. Since the second term of the summand is a spherical matrix, it shifts all the eigenvalues by the constant amount $c_\text{on}-c_\text{off}$. So $C$ has one eigenvalue equal to $\frac{N-1}{N} - 1 + \frac{1}{N} = 0$ and the rest equal to $-1 + \frac{1}{N} - \frac{1}{N} = -1$. 

So $C$ is negative semidefinite, as required, and the sum in expression \ref{exp:full-sum-cov} is nonpositive for any vector of coefficients $a_k$. The eigenvector corresponding to the zero eigenvalue has an intuitive interpretation as corresponding to the vector $a\mathds{1}$, the constant function with zero variance and thus zero covariance.

\section{Bias and consistency}
\label{app:bias-consistency}

\begin{proof}[Proof of Proposition \ref{prop:naive-estimator}]
Since we know that the pushforward measure induced by the random variable $f$ applying Algorithm \ref{alg:sampling-from-code} on a uniform measure on the unit interval gives the distribution $P(X)$, we can write the RHS of equation \ref{eqn:naive-estimator} as
\begin{align*}
\mathds{E}[s(X)] = \int_0^1s(f(c))dc,
\end{align*}
and since $s(f(c))$ is a step function (a linear combination of indicators of preimages in $[0,1]$ for each sequence $x$), it is Riemann integrable, and the LHS is a Riemann sum with mesh size $1/(N+1)$ for the integral.
\end{proof}

\begin{proof}[Proof of Proposition \ref{prop:final-estimator-unbiased}]
The consistency of the estimator can be proven in the same way as naive arithmetic sampling, since it is also a valid Riemann sum with mesh size $1/(N+1)$, no matter what offset is sampled. To show unbiasedness, note that for any constant $c$, the marginal distribution of $c + u ~~ \text{mod} ~~ 1$ for $u \sim U(0,1)$ is just $U(0,1)$ again, the rest proceeds by linearity of expectation.
\end{proof}

\section{Additional Experiments}
\label{app:additional-experiments}
\newcommand{\singleplot}[2]{
\begin{subfigure}[t]{0.3\textwidth}
\centering
\begin{tikzpicture}
    \begin{axis}[
    scale=0.65,
    xlabel={$n$-gram diversity},
    ylabel={BLEU},
    mark size=2pt,
    solid,
    ymajorgrids=true,
    xmajorgrids=true,
    grid style=dashed,
    legend pos=south west,
    xlabel style = {font=\small},
    ylabel style = {font=\small},
    ticklabel style = {font=\small}
    ]
    \addlegendimage{empty legend}\addlegendentry{$\text{TOPK} = #1, N = #2$}
    \addplot [color=tcolor,mark=square*,x filter/.expression={and(\thisrow{topk} == #1, \thisrow{num_decodes} == #2) ? x : nan}] table[x=4gram_diversity_macro,y=mean_sentence_bleu, col sep=comma]{results/t5_base_topk2_topk10_topk0_bleu_vs_diversity_t.csv};
    \addplot [color=tcolor,mark=triangle*,mark size=3pt,x filter/.expression={and(\thisrow{topk} == #1, \thisrow{num_decodes} == #2) ? x : nan}] table[x=4gram_diversity_macro,y=max_sentence_bleu, col sep=comma]{results/t5_base_topk2_topk10_topk0_bleu_vs_diversity_t.csv};
    
    \addplot [color=acolor,mark=triangle*,x filter/.expression={and(\thisrow{topk} == #1, \thisrow{num_decodes} == #2) ? x : nan}] table[x=4gram_diversity_macro,y=mean_sentence_bleu, col sep=comma]{results/t5_base_topk2_topk10_topk0_bleu_vs_diversity_a.csv};
    \addplot [color=acolor,mark=triangle*,mark size=3pt,x filter/.expression={and(\thisrow{topk} == #1, \thisrow{num_decodes} == #2) ? x : nan}] table[x=4gram_diversity_macro,y=max_sentence_bleu, col sep=comma]{results/t5_base_topk2_topk10_topk0_bleu_vs_diversity_a.csv};
    
    \end{axis}
\end{tikzpicture}
\end{subfigure}
}

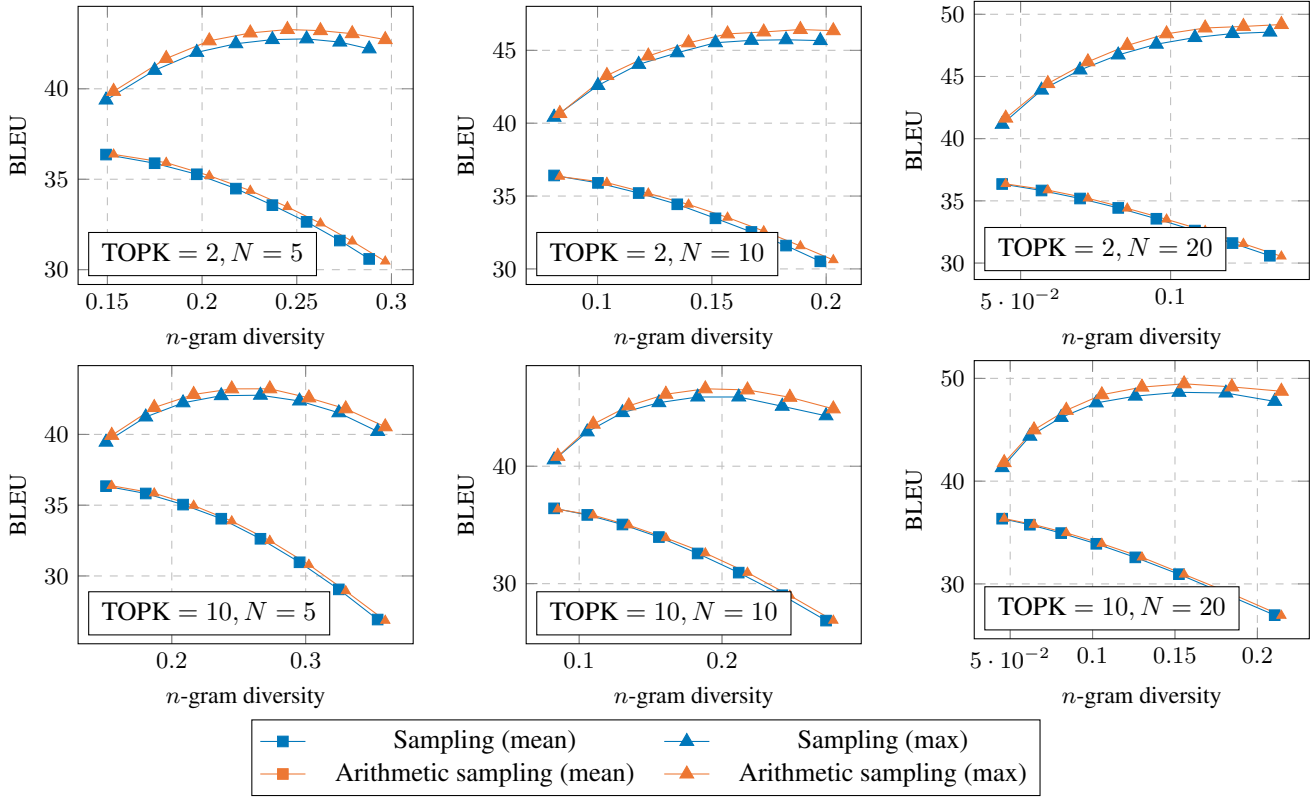
\begin{figure*}
    \centering
    \singleplot{2}{5}
    \hfill
    \singleplot{2}{10}
    \hfill
    \singleplot{2}{20}
    
    \singleplot{10}{5}
    \hfill
    \singleplot{10}{10}
    \hfill
    \singleplot{10}{20}

    \begin{tikzpicture}
        \begin{customlegend}[
            legend columns=2,
            legend style={
                align=left,
                column sep=2ex
            },
            legend entries={Sampling (mean), Sampling (max), Arithmetic sampling (mean), Arithmetic sampling (max)}
        ]
            \addlegendimage{mark=square*,solid,color=tcolor}
            \addlegendimage{mark=triangle*,mark size=3pt,solid,color=tcolor}
            \addlegendimage{mark=square*,solid,color=acolor}
            \addlegendimage{mark=triangle*,mark size=3pt,solid,color=acolor}
        \end{customlegend}
    \end{tikzpicture}    
    \caption{BLEU score vs ngram diversity.}
    \label{fig:t5_base_topk2_topk10_bleu_vs_diversity}
\end{figure*}

\let\singleplot\undefined
\newcommand{\singleplot}[5]{
\begin{subfigure}[t]{0.3\linewidth}
\centering
\begin{tikzpicture}
    \begin{axis}[
    scale=0.65,
    xlabel={$n$-gram diversity},
    ylabel={BLEU},
    mark size=2pt,
    solid,
    ymajorgrids=true,
    xmajorgrids=true,
    grid style=dashed,
    legend pos=south west,
    xlabel style = {font=\small},
    ylabel style = {font=\small},
    ticklabel style = {font=\small},
    ]
    \addlegendimage{empty legend}\addlegendentry{$N = #2$}
    \addplot [mark=none, color=black, dotted, line width=2] table[row sep=crcr] {
    #3 #5\\
    #4 #5\\
    };
    \addplot [color=tcolor,mark=square*,x filter/.expression={and(\thisrow{topk} == #1, \thisrow{num_decodes} == #2) ? x : nan}] table[x=4gram_diversity_macro,y=mean_sentence_bleu, col sep=comma]{results/t5_base_enro_topk0_bleu_vs_diversity_t.csv};
    \addplot [color=tcolor,mark=triangle*,mark size=3pt,x filter/.expression={and(\thisrow{topk} == #1, \thisrow{num_decodes} == #2) ? x : nan}] table[x=4gram_diversity_macro,y=max_sentence_bleu, col sep=comma]{results/t5_base_enro_topk0_bleu_vs_diversity_t.csv};
    
    \addplot [color=acolor,mark=square*,x filter/.expression={and(\thisrow{topk} == #1, \thisrow{num_decodes} == #2) ? x : nan}] table[x=4gram_diversity_macro,y=mean_sentence_bleu, col sep=comma]{results/t5_base_enro_topk0_bleu_vs_diversity_a.csv};
    \addplot [color=acolor,mark=triangle*,mark size=3pt,x filter/.expression={and(\thisrow{topk} == #1, \thisrow{num_decodes} == #2) ? x : nan}] table[x=4gram_diversity_macro,y=max_sentence_bleu, col sep=comma]{results/t5_base_enro_topk0_bleu_vs_diversity_a.csv};
    \end{axis}
\end{tikzpicture}
\end{subfigure}
}

\begin{figure*}
    \centering
    \singleplot{0}{5}{0.15}{0.33}{27.23207212214543}
    \hfill
    \singleplot{0}{10}{0.1}{0.25}{29.502577211064285}
    \hfill
    \singleplot{0}{20}{0.05}{0.17}{31.78855431827144}

    \begin{tikzpicture}
        \begin{customlegend}[
            legend columns=3,
            legend style={
                align=left,
                column sep=2ex
            },
            legend entries={Sampling (mean), Sampling (max), Arithmetic sampling (mean), Arithmetic sampling (max), Beam search}
        ]
            \addlegendimage{mark=square*,solid,color=tcolor}
            \addlegendimage{mark=triangle*,mark size=3pt,solid,color=tcolor}
            \addlegendimage{mark=square*,solid,color=acolor}
            \addlegendimage{mark=triangle*,mark size=3pt,solid,color=acolor}
            \addlegendimage{mark=none,line width=2,dotted,color=black}
        \end{customlegend}
    \end{tikzpicture}    
    \caption{BLEU score vs $n$-gram diversity for WMT16 English-Romanian for various beam sizes $N$.}
    \label{fig:t5_base_enro_topk0_bleu_vs_diversity}
\end{figure*}
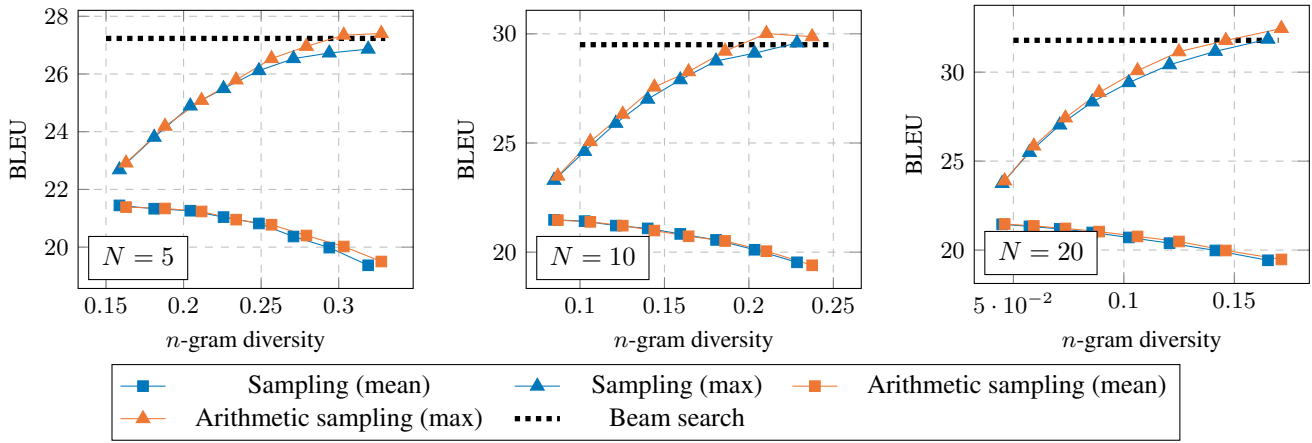

\let\singleplot\undefined

\subsection{WMT English-Romanian sampling diversity}
We show additional results for the WMT English-Romanian test set in Figure~\ref{fig:t5_base_enro_topk0_bleu_vs_diversity}.

\subsection{Arithmetic sampling with top-k modification}
Further results of BLEU vs. $n$-gram diversity for the WMT English-French test set using arithmetic sampling with the top-k sampling modification are shown in Figure \ref{fig:t5_base_topk2_topk10_bleu_vs_diversity}.

\subsection{Arithmetic sampling with temperature modification}
\newcommand{\singleplot}[3]{
\begin{subfigure}[t]{0.3\textwidth}
\centering
\begin{tikzpicture}
    \begin{axis}[
    scale=0.6,
    xlabel={#2},
    ylabel={Avg. BLEU estimate SD},
    yticklabel={$\pgfmathprintnumber{\tick}$.0},
    mark=square*,
    ymajorgrids=true,
    xmajorgrids=true,
    grid style=dashed,
    legend pos={#3},
    xlabel style = {font=\small},
    ylabel style = {font=\small},
    ticklabel style = {font=\small}
    ]
    \addlegendimage{empty legend}\addlegendentry{$T = #1$}
    
    \addplot [color=tcolor,mark=square*,x filter/.expression={\thisrow{temperature} == #1 ? x : nan}] table[x=num_decodes,y=mean_sentence_bleu_std, col sep=comma]{results/t5_base_topk0_sentence_BLEU_sd_vs_sample_size_t.csv};

    \addplot [color=acolor,mark=square*,x filter/.expression={\thisrow{temperature} == #1 ? x : nan}] table[x=num_decodes,y=mean_sentence_bleu_std, col sep=comma]{results/t5_base_topk0_sentence_BLEU_sd_vs_sample_size_a.csv};

    \end{axis}
\end{tikzpicture}
\end{subfigure}
}

\newcommand{\pctreductionplot}[3]{
\begin{subfigure}[t]{0.3\linewidth}
\centering
\begin{tikzpicture}
    \begin{axis}[
    scale=0.6,
    xlabel={#2},
    ylabel={Avg. BLEU est. SD ratio},
    mark=square*,
    ymajorgrids=true,
    xmajorgrids=true,
    grid style=dashed,
    legend pos={#3},
    xlabel style = {font=\small},
    ylabel style = {font=\small},
    ticklabel style = {font=\small}
    ]
    \addlegendimage{empty legend}\addlegendentry{$T = #1$}
    
    \addplot [color=xcolor,mark=square*,x filter/.expression={\thisrow{temperature} == #1 ? x : nan}] table[x=num_decodes,y=mean_sentence_bleu_std_reduction_pct, col sep=comma]{results/t5_base_topk0_sentence_BLEU_sd_vs_sample_size_r.csv};

    \end{axis}
\end{tikzpicture}
\end{subfigure}
}

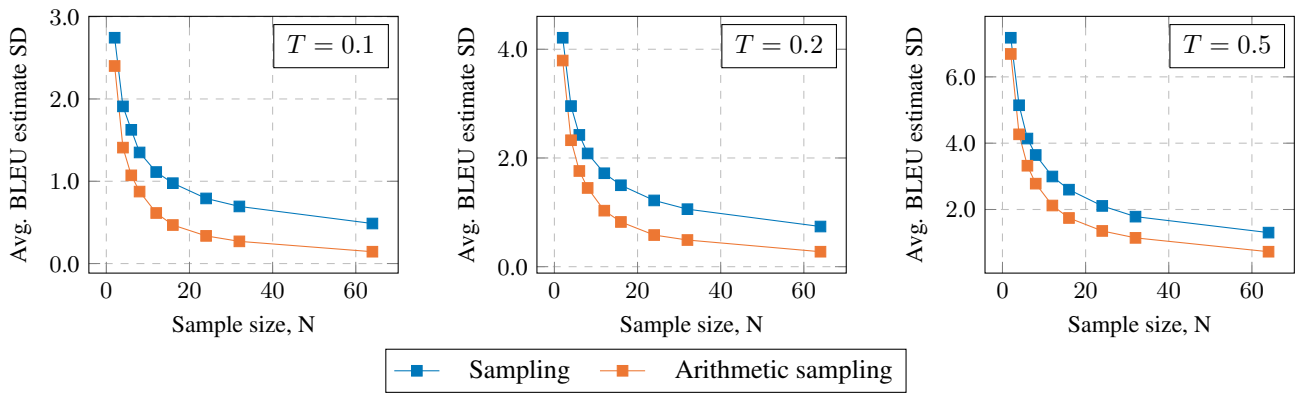
\begin{figure*}
    \centering
    
    \singleplot{0.1}{Sample size, N}{north east}
    \hfill
    \singleplot{0.2}{Sample size, N}{north east}
    \hfill
    \singleplot{0.5}{Sample size, N}{north east}
    
    

    \begin{tikzpicture}
        \begin{customlegend}[
            legend columns=3,
            legend style={
                align=left,
                column sep=2ex
            },
            legend entries={Sampling, Arithmetic sampling}            
        ]
            \addlegendimage{mark=square*,solid,color=tcolor}
            \addlegendimage{mark=square*,solid,color=acolor}   
        \end{customlegend}
    \end{tikzpicture}    
    \caption{Average standard deviation of sentence BLEU estimate for the two sampling methods and different temperatures $T$ averaged over 100 samples. For each sample we estimate sentence BLEU 100 times and compute standard deviation. 
    }
    \label{fig:sentence_bleu_sd_vs_sample_size}
\end{figure*}

\let\singleplot\undefined
\let\pctreductionplot\undefined
\newcommand{\singleplot}[3]{
\begin{subfigure}[t]{0.3\textwidth}
\centering
\begin{tikzpicture}
    \begin{axis}[
    scale=0.65,
    xlabel={#3},
    ylabel={BLEU estimate},
    mark=x,
    ymajorgrids=true,
    xmajorgrids=true,
    grid style=dashed,
    xlabel style = {font=\small},
    ylabel style = {font=\small},
    ticklabel style = {font=\small}
    ]
    \addlegendimage{empty legend}\addlegendentry{$T = #2$}
    \addplot [name path=lowerbound,color=tcolor,no markers,x filter/.expression={and(\thisrow{temperature} == #2, \thisrow{index} == #1) ? x : nan}] table[x=num_decodes,y=mean_sentence_bleu_025p, col sep=comma]{results/t5_base_topk0_sentence_BLEU_vs_sample_size_1501_2001_2501_t.csv};
    \addplot [name path=upperbound,color=tcolor,no markers,x filter/.expression={and(\thisrow{temperature} == #2, \thisrow{index} == #1) ? x : nan}] table[x=num_decodes,y=mean_sentence_bleu_975p, col sep=comma]{results/t5_base_topk0_sentence_BLEU_vs_sample_size_1501_2001_2501_t.csv};
    \addplot[fill=tcolor, forget plot, fill opacity=0.25] fill between[of=lowerbound and upperbound];
    
    \addplot [name path=lowerbound,color=acolor,no markers,x filter/.expression={and(\thisrow{temperature} == #2, \thisrow{index} == #1) ? x : nan}] table[x=num_decodes,y=mean_sentence_bleu_025p, col sep=comma]{results/t5_base_topk0_sentence_BLEU_vs_sample_size_1501_2001_2501_a.csv};
    \addplot [name path=upperbound,color=acolor,no markers,x filter/.expression={and(\thisrow{temperature} == #2, \thisrow{index} == #1) ? x : nan}] table[x=num_decodes,y=mean_sentence_bleu_975p, col sep=comma]{results/t5_base_topk0_sentence_BLEU_vs_sample_size_1501_2001_2501_a.csv};
    \addplot[fill=acolor, forget plot, fill opacity=0.25] fill between[of=lowerbound and upperbound];
    \end{axis}
\end{tikzpicture}
\end{subfigure}
}

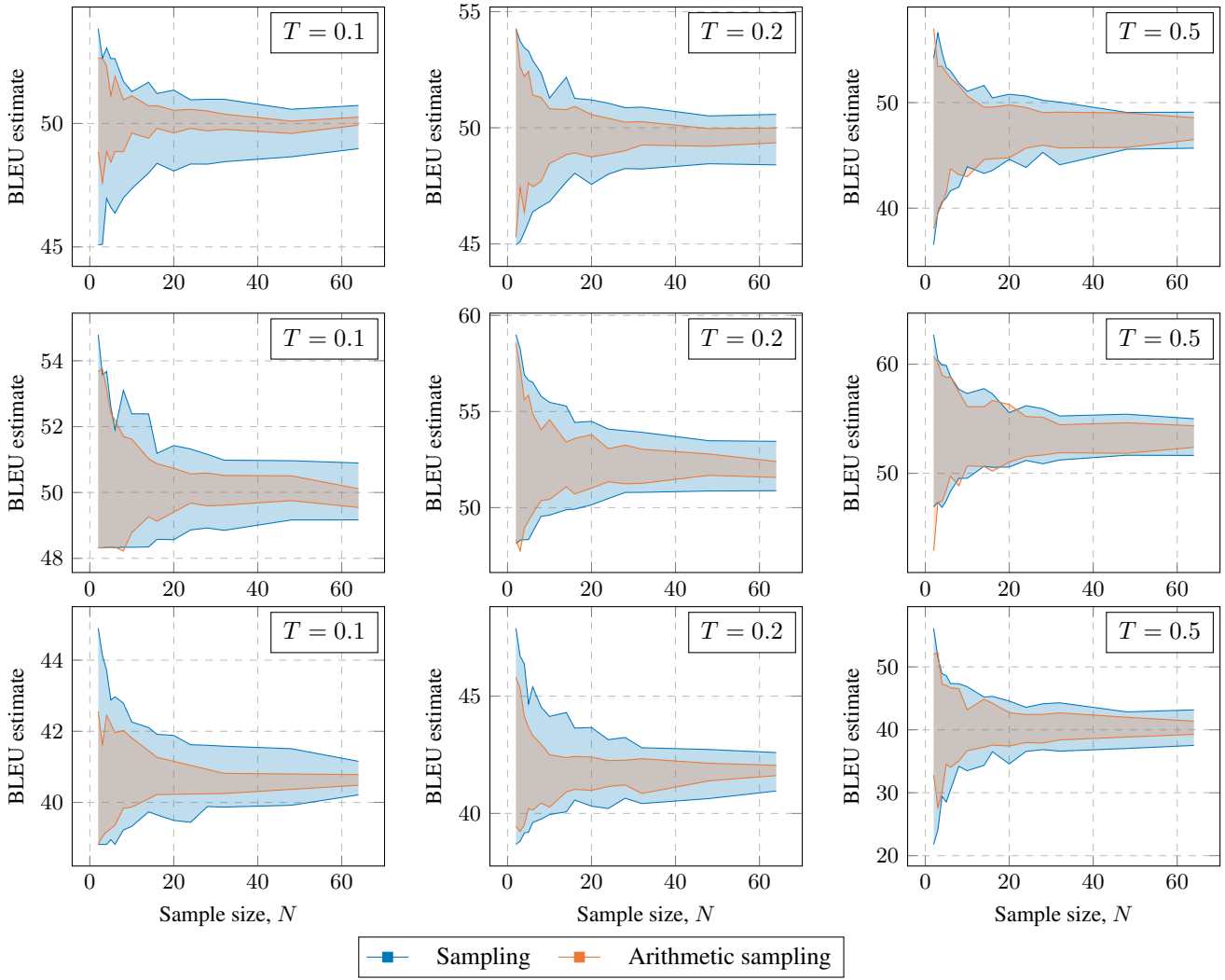
\begin{figure*}
    \centering
    \singleplot{1501}{0.1}{}
    \hfill
    \singleplot{1501}{0.2}{}
    \hfill
    \singleplot{1501}{0.5}{}

    \singleplot{2001}{0.1}{}
    \hfill
    \singleplot{2001}{0.2}{}
    \hfill
    \singleplot{2001}{0.5}{}

    \singleplot{2501}{0.1}{Sample size, $N$}
    \hfill
    \singleplot{2501}{0.2}{Sample size, $N$}
    \hfill
    \singleplot{2501}{0.5}{Sample size, $N$}

    \begin{tikzpicture}
        \begin{customlegend}[
            legend columns=2,
            legend style={
                align=left,
                column sep=2ex
            },
            legend entries={Sampling, Arithmetic sampling}
        ]
            \addlegendimage{mark=square*,solid,color=tcolor}
            \addlegendimage{mark=square*,solid,color=acolor}   
        \end{customlegend}
    \end{tikzpicture}    
    \caption{Sentence BLEU score estimates for three different sentences (1501, 2001 and 2501) estimated using different decoding methods (sampling and arithmetic sampling) for different temperatures $T$. The filled area corresponds to a region from 2.5 to 97.5 percentile of BLEU estimate amongst 100 attempts. We use fine-tuned T5-base model to generate predictions.}
    \label{fig:sentence_bleu_vs_sample_size_1501}
\end{figure*}

\let\singleplot\undefined

We verify that arithmetic sampling reduces variance of sentence BLEU estimation for various sampling temperature levels. We repeat the experiment from Figure~\ref{fig:metric_sd_vs_sample_size} for WMT English-French task, now varying the temperature parameter. Figure~\ref{fig:sentence_bleu_sd_vs_sample_size} shows that the standard deviation of sentence BLEU estimation is always lower with arithmetic sampling regardless of the temperature used. 

\subsection{Sentence BLEU estimation variance}
Results on sentence BLEU estimation variance for 3 other arbitrary WMT English-French test set sentences are shown in Figure \ref{fig:sentence_bleu_vs_sample_size_1501}.

\end{document}